% RLJ main.tex Version 2026.1

\documentclass[10pt]{article} % For LaTeX2e

%%%%%%%%%%%%%%%%%%%%%%%%%%%%%%%%%%%%%%%%%%%%%%%%%%%%%%%%%%%%%%%%
% AUTHOR: Select ONE option:
     % [accepted]{rlj} --> for camera ready (after peer review, if accepted)
%      {rlj}           --> for submission
%      [preprint]{rlj} --> to de-anonymize and remove references to RLJ/RLC
%%%%%%%%%%%%%%%%%%%%%%%%%%%%%%%%%%%%%%%%%%%%%%%%%%%%%%%%%%%%%%%%
% \usepackage{rlj}           % Should be uncommented for submission
% \usepackage[accepted]{rlj} % Should be uncommented for the camera-ready
\usepackage[preprint]{rlj} % Should be uncommented for preprint versions

%%%%%%%%%%%%%%%%%%%%%%%%%%%%%%%%%%%%%%%%%%%%%%%%%%%%%%%%%%%%%%%%
% WARNING: The following packages are already included in the
%          rlj.sty style file:
%
%  1. fancyhdr  - For controlling header/footers
%  2. natbib    - For formatting the bibliography
%  3. enumitem  - To customize the appearance of lists
%  4. fontenc (with option [T1]) - Allows for proper hyphenation and accents
%  5. times     - Times new roman font
%  6. ragged2e  - Used to justify text
%  7. tcolorbox - Used to create boxes on cover page
%  8. hyperref  - Configures hyperlinks throughout (e.g., links to references)
%  9. xcolor    - Used to define custom colors for links and boxes
%  10. amsmath  - Not used, but conflicts with lineno, so we include (and patch) it for authors
%  11. etoolbox - Included in the amsmath + lineno patch
%  12. lineno   - For adding line numbers when in submission
%
% You do not need to include them again in your main.tex.
% Including them again may lead to conflicts or compilation errors.
% Additionally, avoid loading packages that might conflict with these.
%%%%%%%%%%%%%%%%%%%%%%%%%%%%%%%%%%%%%%%%%%%%%%%%%%%%%%%%%%%%%%%%

%%%%%%%%%%%%%%%%%%%%%%%%%%%%%%%%%%%%%%%%%%%%%%%%%%%%%%%%%%%%%%%%
% Recommended (but not required) packages
%%%%%%%%%%%%%%%%%%%%%%%%%%%%%%%%%%%%%%%%%%%%%%%%%%%%%%%%%%%%%%%%
\usepackage{amssymb}            % Defines common symbols like \mathbb R
\usepackage{mathtools}          % Extends amsmath, providing common math tools
\usepackage{mathrsfs}           % Enables \mathscr, which can work in cases that \mathcal does not
%\mathtoolsset{showonlyrefs}     % Only number equations that are referenced (optional)
\usepackage{graphicx}           % For including images
\usepackage{subcaption}         % Allows for the use of subfigures and subcaptions
\usepackage[space]{grffile}     % For spaces in image names
\usepackage{url}                % For displaying URLs
\usepackage{lipsum}             % For placeholder text

% [START] Added by authors

\usepackage{amsmath,amsfonts,bm}

\def\bpi{{\overline{\pi}}}
\def\bPi{{\overline{\Pi}}}
\def\bd{d}
\def\bmu{{\overline{\mu}}}

\def\eqref#1{equation~\ref{#1}}

\def\1{\bm{1}}

\def\eps{{\epsilon}}

\DeclareMathAlphabet{\mathsfit}{\encodingdefault}{\sfdefault}{m}{sl}
\SetMathAlphabet{\mathsfit}{bold}{\encodingdefault}{\sfdefault}{bx}{n}

\def\gA{{\mathcal{A}}}

\def\gD{{\mathcal{D}}}

\def\gF{{\mathcal{Z}}} %
\def\gG{{\mathcal{G}}}

\def\gI{{\mathcal{I}}}

\def\gM{{\mathcal{M}}}

\def\gS{{\mathcal{S}}}
\def\gT{{\mathcal{T}}}

\def\sN{{\mathbb{N}}}

\def\sR{{\mathbb{R}}}

\newcommand{\E}{\mathbb{E}}

\newcommand{\R}{\mathbb{R}}

\DeclareMathOperator*{\argmax}{\textnormal{argmax}}
\DeclareMathOperator*{\argmin}{\textnormal{argmin}}

\usepackage{ragged2e}
\usepackage{subcaption}
\usepackage{url}
\usepackage{algorithm}
\usepackage{algpseudocode}
\usepackage{amsthm}

\usepackage{amsmath}
\usepackage{amssymb}
\usepackage{mathtools}
\usepackage{amsthm}
\usepackage{xspace}
\usepackage{bm}
\usepackage{bbm}
\usepackage{enumitem}
\usepackage{caption}
\usepackage{diagbox}
\usepackage{booktabs, multirow}
\usepackage{spverbatim}
\usepackage[labelformat=simple]{subcaption}
\usepackage{wrapfig}

\theoremstyle{plain}
\newtheorem{theorem}{Theorem}[section]

\newtheorem{lemma}[theorem]{Lemma}

\theoremstyle{definition}

\newtheorem{assumption}[theorem]{Assumption}
\theoremstyle{remark}

\newcommand{\mc}[1]{\ensuremath{\mathcal{#1}}\xspace}

\newcommand{\tn}[1]{\ensuremath{\textnormal{#1}}\xspace}
\newcommand{\ol}[1]{\ensuremath{\overline{#1}}\xspace}

\usepackage{xcolor}

% --- REQUIRED PACKAGES ---
\usepackage{xcolor}
\usepackage[normalem]{ulem} % Provides \sout for strikethrough

% --- CHANGE TRACKING MACRO ---
\definecolor{changeRed}{RGB}{200, 0, 0} 

% Set to true to show changes, false to show the "clean" final version
\newif\ifshowchanges
% \showchangestrue % Comment out this line to disable highlighting

\ifshowchanges
    % When enabled: New text is red, deleted text is red + strike-through
    \newcommand{\rev}[1]{\textcolor{changeRed}{#1}}
    \newcommand{\del}[1]{\textcolor{changeRed}{\sout{#1}}}
\else
    % When disabled: New text is normal black, deleted text is hidden
    \newcommand{\rev}[1]{#1}
    \newcommand{\del}[1]{} 
\fi
% [END] Added by authors

%%%%%%%%%%%%%%%%%%%%%%%%%%%%%%%%%%%%%%%%%%%%%%%%%%%%%%%%%%%%%%%%
% AUTHOR: Fill in the following meta-data
%%%%%%%%%%%%%%%%%%%%%%%%%%%%%%%%%%%%%%%%%%%%%%%%%%%%%%%%%%%%%%%%

% Enter the title of your paper:
\title{Dense and Diverse Goal Coverage in Multi Goal \\Reinforcement Learning}

% The "running title" will be displayed in the header on every-other page.
% It is typically either the same as the title or a shorter version of the title.
% Enter your running title here:
\setrunningtitle{Dense and Diverse Goal Coverage in Multi Goal RL}

% WARNING: Authors must not appear in the submitted version. They should be hidden
% as long as the rlj package is used without the [accepted] or [preprint] options.
% Non-anonymous submissions will be rejected without review.

% Enter the author names below. 
% NOTE: Denote affiliations using superscripts as in the provided example.
% NOTE: Use \textscript{1,2,3} instead of $^{1,2,3}$.
%       - Failure to do so will cause affiliation superscripts to appear on the cover page for camera-ready and preprint versions.
\author{Sagalpreet Singh\textsuperscript{1}, Rishi Saket\textsuperscript{1}, Aravindan Raghuveer\textsuperscript{1}}

% NOTE: For camera-ready and preprint versions, the cover page includes author names but not affiliations.
% It automatically removes the superscripts for affiliations.
% If the automatic process breaks (e.g., if an author name should include a superscript), you can manually define the author string to appear on the cover page by uncommenting the following line.
%\coverPageAuthor{Marlos C. Machado, Philip S. Thomas, Lorem Ipsum}

% Author emails, which can be clustered if they have shared endings as in this example
\emails{\{sagalpreet, rishisaket, araghuveer\}@google.com}

% Author affiliations, in the order the occur
% The inclusion of state/province, etc. is optional.
% The inclusion of multiple affiliations is optional.
%   - List multiple affiliations with comma-separated numbers as in the example.
\affiliations{
$^{1}$\textbf{Google DeepMind}
% The following two lines are optional and can be commented out
% \par % If including additional comments like below, use \par to add some whitespace. 
% $^\dagger$ Additional comments can be added like this, e.g., indicating equal contribution
}

\contribution{
    % Contribution
    This paper formalizes the problem of maximizing expected return while ensuring uniform coverage of goal states within a Multi-Goal RL framework, where a state is determined to be a goal state upon visiting it.
    }
    {
    % Caveat:
    Multi-Goal RL operates on an MDP where a binary reward (+1 for visiting goal states, 0 otherwise) is provided. Unlike Goal-conditioned RL, which requires pre-defined target goals, or GFlowNets, which are often restricted to deterministic Directed Acyclic Graphs (DAGs), our formulation applies to general MDPs without requiring access to a target distribution.
    }

\contribution{
    % Contribution
    This paper identifies a fundamental trade-off between return maximization and diverse goal coverage and introduces a novel objective function designed to balance these.
    }
    {
    % Caveat:
    Existing methods that encourage exploration via entropy regularization (e.g., Soft Actor-Critic) or intrinsic rewards (e.g., pseudo-counts) are often suboptimal for this task because they incentivize the agent to visit all under-explored states, including non-goal states, rather than focusing diversity only on the goal manifold.
    }

\contribution{
    % Contribution
    This paper proposes an iterative algorithm based on the Frank-Wolfe framework to learn a policy mixture that optimizes the proposed objective, providing formal performance guarantees for the resulting policy. 
    }
    {
    % Caveat:
    While Frank-Wolfe has been utilized for marginal state distribution matching, its application to our specific multi-goal objective is novel. Our approach incorporates a non-trivial offline RL sub-routine for modified reward optimization, allowing for efficient data reuse as new policies are added to the mixture.
    }

\contribution{
    % Contribution
    This paper demonstrates that the proposed algorithm achieves superior goal coverage while maintaining high returns compared to state-of-the-art baselines across several continuous control MuJoCo tasks with multiple goals. 
    }
    {
    % Caveat:
    None
    }

% Include a list of keywords for the topic of the paper:
\keywords{reinforcement learning, multi-goal, diverse goal coverage.} % Your keywords

% Define the summary that appears on the cover page.
\summary{Reinforcement Learning algorithms are primarily focused on learning a policy that maximizes expected return. As a result, the learned policy can exploit one or few reward sources. However, in many natural situations, it is desirable to learn a policy or a mixture thereof that induces a dispersed marginal state distribution over rewarding states, while still maximizing the expected return which is typically tied to visiting a goal state. This aspect remains relatively unexplored. We formalize the problem of maximizing the expected return while uniformly (almost) visiting the goal states as Multi Goal RL.}

%%%%%%%%%%%%%%%%%%%%%%%%%%%%%%%%%%%%%%%%%%%%%%%%%%%%%%%%%%%%%%%%
%% Begin document, create title and abstract
%%%%%%%%%%%%%%%%%%%%%%%%%%%%%%%%%%%%%%%%%%%%%%%%%%%%%%%%%%%%%%%%
\begin{document}

\makeCover  % Create the cover page
\maketitle  % Make the title section

\begin{abstract}
Reinforcement Learning algorithms are primarily focused on learning a policy that maximizes expected return. As a result, the learned policy can exploit one or few reward sources. However, in many natural situations, it is desirable to learn a policy that induces a dispersed marginal state distribution over rewarding states, while maximizing the expected return which is typically tied to visiting a goal state. This aspect remains relatively unexplored. Existing techniques based on entropy regularization and intrinsic rewards use stochasticity for encouraging exploration to find an optimal policy which may not necessarily lead to dispersed marginal state distribution over rewarding states. Other RL algorithms which match a target distribution assume the latter to be available apriori. This may be infeasible in large scale systems where enumeration of all states is not possible and a state is determined to be a goal state only upon visiting it. We formalize the problem of frequently and uniformly visiting the goal states as Multi Goal RL, and propose a novel algorithm that learns a high-return policy mixture with marginal state distribution dispersed over the set of goal states. Our algorithm is based on optimizing a custom RL reward which is computed - based on the current policy mixture - at each iteration for a set of sampled trajectories. The latter are used via an offline RL algorithm to update the policy mixture. We prove performance guarantees for our algorithm, showing efficient convergence bounds for optimizing a natural objective which captures the expected return as well as the dispersion of the marginal state distribution over the goal states. We design and perform experiments on a synthetic MDP and MuJoCo environments to evaluate the effectiveness of our algorithm.
% \footnote{Code will be released upon publication.}
\end{abstract}

%%%%%%%%%%%%%%%%%%%%%%%%%%%%%%%%%%%%%%%%%%%%%%%%%%%%%%%%%%%%%%%%
%% Section: Submission of papers to RLJ/RLC
%%%%%%%%%%%%%%%%%%%%%%%%%%%%%%%%%%%%%%%%%%%%%%%%%%%%%%%%%%%%%%%%
\section{Introduction}\label{section:introduction}

Reinforcement Learning (RL) has been effectively applied to various domains  like game playing \citep{mnih2013playing, silver2016mastering}, robotic control \citep{gu2017deep, schulman2015high} and aligning pretrained models \citep{10.5555/3600270.3602281}. The goal is to learn a policy maximizing the expected reward. Many real-world problems fall in the multi-goal settings where the primary objective is to reach an outcome from the subset of desirable outcomes. In such a setting, the reward is a signal of the outcome and not how that specific outcome was arrived at, like the correct placement of a tool by a robotic arm \citep{10.5555/3295222.3295258}, or the discovery of a stable molecular structure \citep{ghugare2024searching}, or reinforcement learning on verifiable rewards \citep{DBLP:journals/corr/abs-2411-15124}.
For each of these examples, there exist multiple desirable outcomes (or goal states), often characterized by a sparse positive scalar reward. 

In real world settings, a policy that utilizes only one goal state (or a subset thereof) may discover only a single stable molecular structure (or a subset thereof). Similarly, the navigation policy of a robot is susceptible to failure if a few goal states become unavailable.
Thus the desirable strategy is to learn a policy (or a policy mixture) that maximizes the return by %
 visiting a large number of goal states rather than exploiting a small subset of them.
Further, in large-scale RL environments the goal states (even if finite) cannot be efficiently enumerated. In particular, the set of goal states is not available apriori, and can only be accessed by sampling trajectories from the environment.  

The standard RL objective is defined via reward functions. RL algorithms learn a policy $\pi$ that maximizes the expected return $J(\pi) = \E\left[\sum_{t=0}^{\infty}\gamma^tr_t\mid\pi\right]$ \citep{10.5555/3312046}.
As a result of such formulation, the agent can learn to exploit a single (or a small subset of) reward source. As a result this standard objective is unsuitable for learning a policy which, while maximizing the reward, visits many goal states.

We abstract out the problem of learning a policy mixture that \emph{frequently} visits a \emph{diverse} set of goal states as follows. Consider for ease of exposition, an MDP with finite states $\mc{S}$, a subset $\mc{S}^+ \subseteq \mc{S}$ of goal states, and a state-only dependent reward function $R: \mc{S} \to \R$ given by $R(s) := \mathbb{I}\{s \in \mc{S}^+\}$. With a discount factor of $\gamma \in (0,1)$ for a given policy define the discounted marginal state distribution $d[\pi]$ as $d[\pi](s) := (1-\gamma)\sum_{t=0}^\infty \gamma^t \Pr[s_t = s\mid \pi]$ where $s_0, s_1, \dots$ is the sequence of states visited in a trajectory sampled from $\pi$. Thus, the expected return $J(\pi) = \E\left[\sum_{t=0}^{\infty}\gamma^t R(s_t)\mid\pi\right]$ equals $(1-\gamma)^{-1}\sum_{s\in \mc{S}^+} d[\pi](s)$. Note that these quantities can be generalized to a mixture $\ol{\pi}$ of policies by taking $\bd[\ol{\pi}](s) := \E_{\pi \sim \ol{\pi}}\left[d[\pi](s)\right]$, and $J(\ol{\pi}) := \E_{\pi \sim \ol{\pi}}[J(\pi)]$. A policy mixture is a probability distribution over policy class. Following a policy mixture means sampling a policy from the policy mixture and following it for an entire episode.

While $J(\ol{\pi})$ would be the traditional return maximization objective, we need to also capture a measure of the diversity of goal states visited.
A natural \textbf{objective} is $\gF(\bpi) := (1-\gamma)J(\ol{\pi}) + \gI^{\mc{S}^+}(d[\ol{\pi}]) = \sum_{s \in \mc{S}^+}(d[\ol{\pi}](s) - d[\ol{\pi}](s)^2/2)$ where $\gI^{X}(p) = -(1/2)\sum_{x\in X} p(x)^2$ is a measure of diversity based on Gini criterion (see \citet{breiman2017classification}).
\rev{The specific choice of Gini criterion as a measure of diversity is justified by the concavity and bounded \emph{curvature constant (see $\mc{C}_\mc{Z}$ defined in Lemma \ref{lemma:objective_properties})} of the resultant objective function.}
In other words, the maximization objective is the sum of the traditional reward and the diversity of the marginal distribution on the goal states. It is easy to see that a policy (assuming it exists) which visits only the goal states and whose discounted marginal state distribution is supported uniformly (and only) over $\mc{S^+}$ maximizes $\gF(\bpi)$ with a value of $1 - 1/(2|\mc{S}^+|)$. Further, uniformly increasing the marginal state distribution probabilities on $\mc{S}^+$, as long as their sum is at most $1$, increases the objective (see Section \ref{sec:optobj} for details). 

{\bf Our Contributions.} We provide an iterative algorithm with offline RL as a subroutine to construct a policy mixture with provable performance guarantees. At each iteration, the algorithm samples $N_T$ trajectories of horizon $H$ as per the existing policy mixture, computes rewards for the observed states using the current policy mixture, and uses $N_\tn{FQI}$ iterations of \emph{Fitted-Q Iteration (FQI)} \citep{LVY19, agarwal2019reinforcement} on the sampled data to obtain a new policy which is added to the policy mixture. We take $\mc{F}$ to be the class of  action-value  or $Q$-value predictors over which FQI optimizes, and the obtained policy is the greedy one w.r.t. the computed $Q$-value predictor. Let $\ol{\pi}^K$ be the policy mixture obtained after $K$ iterations of the algorithm for a finite state MDP. Then, $|\gF(\bpi^K) - \tn{argmax}_{\ol{\pi}}\gF(\bpi)|$ is, with high probability \rev{$1-(\delta + \delta_d)$}, bounded by $O(1/K) + O(\gamma^{H} + \gamma^{N_{\tn{FQI}}}) + O(\epsilon_{\tn{approx}}) + O\left(\sqrt{(HN_T)^{-1}\left[\log(KN_\tn{FQI}\tn{dim}_\mc{F}/\delta) + \tn{dim}_\mc{F}\log(HN_T)\right]}\right)\hspace{0.95em} + O\left(\sqrt{(1/N_T)\log(K/\delta_d)}\right)$
where $\epsilon_{\tn{approx}}$ is the \emph{inherent Bellman error (see Assumption \ref{asn:inherent_error})} and ${\tt dim}_{\mc{F}}$ the \emph{pseudo-dimension} of the class of action-value predictors, assuming a lower bounded \emph{concentrability factor (see Assumption \ref{asn:concentrability})}. Section \ref{section:algorithm} contains the algorithm and Theorem \ref{thm:discrete_bound} formally states the associated performance guarantees. We also extend the algorithm to continuous state-action spaces by using continuous counterpart of FQI \citep{antos2007fitted}.
% \del{ along with a set of well motivated techniques detailed in Section \ref{sec:algorithmic_extensions} to bridge the gap between theory and practice}

To evaluate the performance of algorithm empirically, we perform 2 sets of experiments, 1) Discrete MDPs to gain clear insights into the benefits of algorithm by comparing the discounted marginal state distribution induced by the learnt policy mixture with those of the existing algorithms, and 2) Experiments on Reacher, Pusher, Ant and Half Cheetah environments from Brax with multiple goals using SAC, \rev{GFlowNets, }Pseudo Counts and SMM as the baseline methods.

\subsection{Prior Works}\label{section:prior_works}
We present a short survey of related works categorized into distinct topics. For each topic, the corresponding problem setting is described and distinguished from the one addressed in this study.

\textbf{Entropy-Regularized Reinforcement Learning.} 
In widely used RL algorithms like Soft Actor-Critic \citep{haarnoja2018soft}, the actor aims to maximize expected return while also maximizing the entropy of learnt policy by introducing a temperature parameter which determines the relative importance of entropy vs. the return thereby controlling the stochasticity of the optimal policy. 
Similarly, \citet{ahmed2019understanding}, \citet{han2021max} and \citet{abdolmaleki2018maximum} propose methods that use maximum entropy objectives to encourage exploration.
While such formulations do encourage exploration, no direct conclusion can be drawn about the diversity in set the of goal states visited by the policy.
Further, recent works like \citet{zhang2025when} have highlighted how maximum entropy can mislead policy optimization by enforcing high entropy in states where one action is vastly superior compared to other actions. %
As demonstrated by examples in Figures \ref{figure:mdp_explore} and \ref{figure:mdp_perfect}, learning a highly stochastic high return policy may not align with our  objective.%

\begin{figure}[t]
    \begin{subfigure}[t]{0.2\linewidth}
        \includegraphics[width=\linewidth]{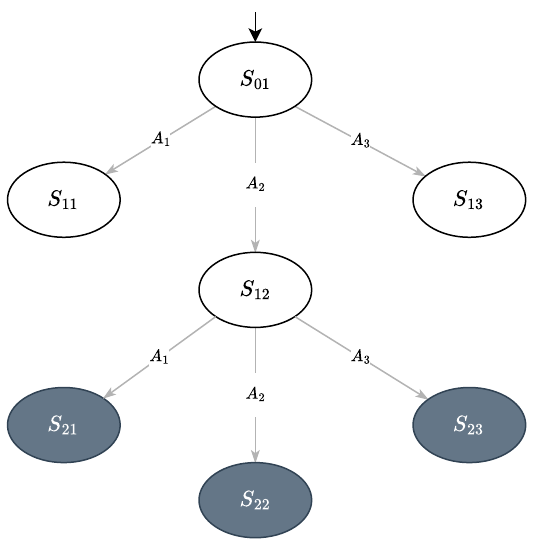}
        \caption{}\label{figure:mdp}
    \end{subfigure}
    \hfill
    \begin{subfigure}[t]{0.2\linewidth}
        \includegraphics[width=\linewidth]{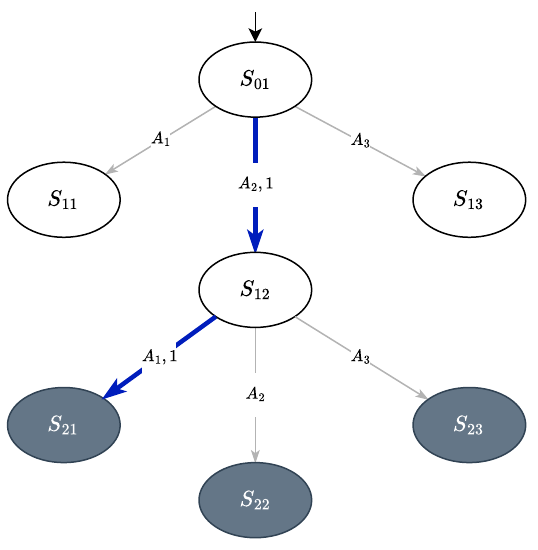}
        \caption{}\label{figure:mdp_exploit}
    \end{subfigure}
    \hfill
    % \\[1em]
    \begin{subfigure}[t]{0.2\linewidth}
        \includegraphics[width=\linewidth]{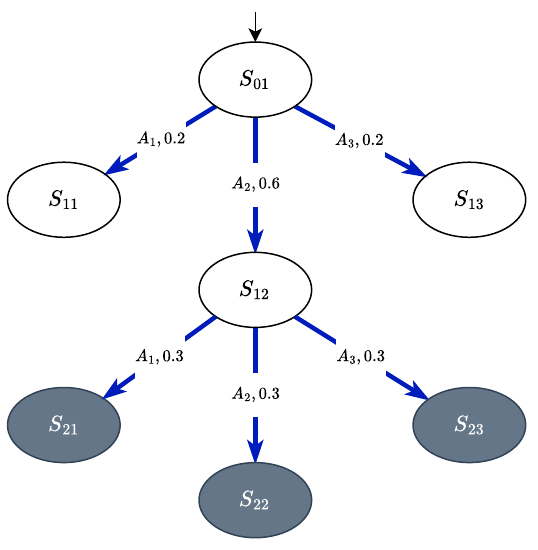}
        \caption{}\label{figure:mdp_explore}
    \end{subfigure}
    \hfill
    \begin{subfigure}[t]{0.2\linewidth}
        \includegraphics[width=\linewidth]{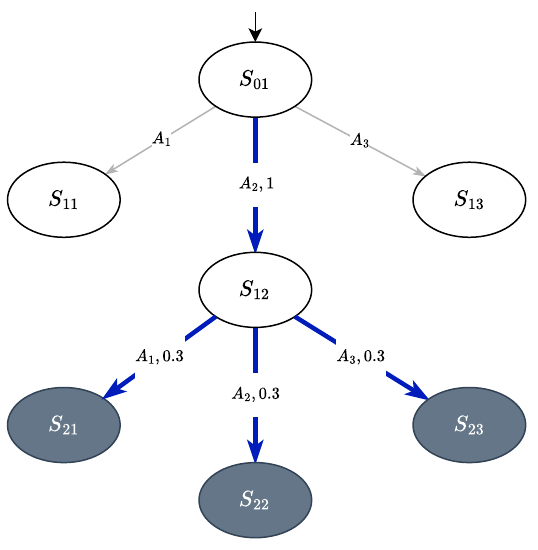}
        \caption{}\label{figure:mdp_perfect}
    \end{subfigure}

    \caption{(a) A continuing deterministic MDP with a start state and 3 goal states (highlighted). Assume that all states with no outgoing edges are sink states i.e. any action in these states leads back to the same state. State $S_{01}$ is critical where enforcing high stochasticity can lead to low returns. However, it is perfectly fine to enforce highest entropy on action selection at state $S_{12}$. (b) An algorithm that encourages exploration may still end up learning a high return policy with low stochasticity which only visits a subset of goal states. (c) A policy that encourages exploration by promoting stochastic policies will enforce high stochasticity at all states (including critical states) which may lead to policies with suboptimal expected returns. (d) A desirable policy is the one which visits a diverse set of goal states without compromising on the expected return.}
\end{figure}

\textbf{Goal Conditioned Reinforcement Learning (GCRL).} 
In GCRL \citep{ijcai2022p770, eysenbach2022contrastive, 10.5555/3045118.3045258, 10.5555/3295222.3295258}, an agent learns to achieve different goals  
by making decisions depending on the goal.
In other words, the agent's actions are conditioned not only on the current observation but also the goal state.
GCRL assumes the knowledge of the specific  goal state during learning as well as inference, which differentiates it from our multi-goal setup.
\rev{Works like Contrastive RL \citep{eysenbach2022contrastive} and Quasimetric RL \citep{myers2025offline} utilize the structure of the state space to learn efficient distance metrics to goals.
Similarly, C-Learning \citep{eysenbach2020c} reformulates goal-reaching as future state density classification.
These methods improve sample efficiency but operate in the paradigm of reaching a queried goal. To address long horizons, HiQL \citep{park2023hiql} and LEAP \citep{nasiriany2019planning} learn high-level policies that propose sub-goals in a latent space for low-level controllers.
While these methods allow reaching distant goals, they rely on the decomposition of a provided task, whereas multi goal RL focuses on the discovery and near-uniform visitation of the set of all rewarding states.}
Some works in GCRL, for e.g. by \citet{zhao2019maximum}, study the problem of maximizing diversity in the set of goal states achieved.
In all these settings, observations are augmented with the goal in an episode for the agent to make decisions to achieve that goal.

\textbf{State Marginal Matching in Reinforcement Learning.} 
\citet{hazan2019provably} study and propose algorithms for learning a policy mixture that optimizes for an objective defined solely as a function of state-visitation frequencies. \citet{lee2019efficient} propose an algorithm for learning a policy mixture for which the marginal state density matches a target density function. These methods are useful in learning exploratory policies or scenarios where the goal state distribution is known apriori. However, absence of access to the target marginal state distribution is the key 
 differentiating factor due to which such techniques are not applicable to our problem setting. 

\textbf{Intrinsic Reward based Reinforcement Learning.} 
Intrinsic rewards, exploration bonus, curiosity enabled exploration, and diversity in visited states and actions taken have been explored quite extensively in existing literature. \citet{chentanez2004intrinsically} proposed learning a hierarchy of reusable skills by providing intrinsic rewards for achieving novel events, rather than for visiting a diverse set of states. \citet{achiam2017surprise} formulated an intrinsic reward based on surprise, defined as the agent's model prediction error, to drive exploration towards regions of the state space  understood poorly by the agent. \citet{pathak2017curiosity} generate an intrinsic reward from the error in predicting the next state's features given the current state and action, which incentivizes the agent to explore novel interactions. \citet{eysenbach2018diversity} learn a set of diverse skills by rewarding policies for visiting states that are different from the states visited by other skills, thereby maximizing state space coverage rather than explicitly learning to visit varied goals. 
The primary focus of these techniques is to encourage exploration in learning a policy without any specific consideration of induced marginal state distribution.

\rev{\textbf{GFlowNets.} The objective of GFlowNets \citep{bengio2021flow, malkin2022trajectory} is to sample structures i.e. visit terminal states with probabilities proportional to a given reward function. It  uses a flow matching objective which requires knowledge of the parents of each state. Further, it also assumes that the transition function is deterministic and does not induce any cycles, as they rely on modeling flow conservation from a source to terminal states without loops. This restricts their direct application in general control environments where cycles are natural (e.g., a robot pacing back and forth). In contrast, multi-goal RL setting is not restricted by such assumptions and is applicable to general MDPs.}

\rev{\textbf{Other Related Works.} Methods like MEGA \citep{pitis2020maximum} and PEG \citep{hu2023planning} prioritize goals with high uncertainty or learning progress to expand the frontier of reachable states focusing on exploration while multi goal RL specifically aims to increase the diversity in visitation over the goal set, rather than just expanding the frontier. Variational Empowerment as Representation Learning \citep{choi2021variational} defines diversity (implicitly via Mutual Information) as skill diversity or state coverage in the context of empowerment. It aims to maximize the agent's ability to reach distinct states, but it does not strictly enforce a uniform probability distribution over a specific subset of correct outcome states defined by an external reward hence distinguishing it from our work.}

\section{Preliminaries}\label{section:preliminaries}
\textbf{MDP.} Consider a continuing (infinite horizon) MDP, $\gM=\{\gS, \gA, R, \gamma, P, \rho_0\}$ where $\gS$ denotes the state space, $\gA$ denotes the action space, $R$ denotes the reward function, $0\leq \gamma < 1$ denotes the discount factor, $P: \gS \times \gA \rightarrow \Delta(\gS)$ denotes the environment dynamics, and $\rho_0 \in \Delta(\gS)$ denotes the start state distribution.

\textbf{State \& Action Spaces.} We assume finite state and action spaces for discrete MDPs with cardinalities $|\gS|,|\gA| \in \sN$ respectively. For continuous MDPs, we assume bounded state and action spaces such that $\gS=[0,1]^{d_\gS}$ and $\gA=[0,1]^{d_\gA}$ where $d_\gS, d_\gA \in \sN$.

\textbf{Reward Function.} As we study the multi goal setting, the reward is a function of only the state. We refer to desirable outcomes/states as \textit{goal} states and the rest as \textit{non-goal} states. Formally, the state space is partitioned into sets of goal ($\gS^+$) and non-goal ($\gS^-$) states such that $\gS=\gS^+ \cup \gS^-$ and $\gS^+ \cap \gS^-=\phi$. The reward function can be defined as $R(s) = \begin{cases} 1, & \text{if } s \in \gS^+ \\ 0, & \text{if } s \in \gS^- \end{cases}$. $R$ is essentially a binary classifier over state space.

\textbf{Policy \& Policy Mixture.} A \emph{policy} $\pi$ function specifies the probability of taking an action from a state.  For any $a\in\gA$ and $s\in\gS$, $\pi(a|s)$ is the probability with which action $a$ is selected in state $s$ when following policy $\pi$. We denote the policy class by $\Pi$. A policy mixture (denoted by $\bpi$) is a probability distribution over the policy class. The class of all valid policy mixtures is denoted by $\bPi=\Delta(\Pi)$. Following a policy mixture means sampling a policy from the mixture and then following it for an entire episode.

{\bf Action-Value predictors.} The action-value or $Q$-value function corresponding to a policy $\pi$ is the expected sum of discounted reward starting from a state and a specific action i.e., $q_{\pi}(s,a) = \E_{\pi}\left[\sum_{t=0}^t \gamma^t R_t\,\mid\, s_0 = s, a_0 = a\right]$. On the other hand, given an action-value function predictor $f : \mc{S} \times \mc{A} \to \R$, the corresponding greedy policy $\pi'$ given by $\pi'(s)\in \tn{argmax}_a f(s, a)$ yields at least as much return (starting from any state) as $\pi$. We take $\mc{F}: \mc{S} \times \mc{A} \to [0, V_\tn{max}]$ to be the class of action-value predictors and use them to derive the greedy policy. Note that $V_\tn{max} = 1/(1-\gamma)$.

To state our results, we shall also use the \emph{pseudo-dimension} of $\mc{F}$, denoted by ${\tt dim}_{\mc{F}}$. For any class of real-valued functions, $\mc{H} : \mc{X} \to \R$, the pseudo-dimension ${\tt dim}_{\mc{H}}$ is defined to be the VC-dimension of the class of binary-valued functions $\{t_h : \mc{X}\times \R \to \R \mid h\in \mc{H}\}$ where $t_h(x,a) = \tn{sign}(h(x) -a)$. 

\textbf{Discounted Marginal State Distribution.} Denote the probability of agent being in state $s$ at timestep $t$ when following a fixed policy $\pi$ for a fixed MDP $\gM(\gS, \gA, R, \gamma, \rho_0)$ as $d_{\pi,\gM}^t(s)$ such that $d_{\pi,\gM}^0(s)=\rho_0$.
Note that $d_{\pi,\gM}^t$ is a probability mass function (PMF) for discrete state spaces and probability density function (PDF) for continuous state spaces. In this exposition, for simplicity we consider only finite state spaces (see paragraph on \textit{Continuous state-action spaces} in Section \ref{section:algorithm}).
Discounted marginal state distribution is defined as $d_{\pi,\gM}(s) := (1-\gamma)\sum_{t=0}^\infty \gamma^t d_{\pi,\gM}^t(s)$.
For brevity, we use $d[\pi](s)$ instead of $d_{\pi,\gM}(s)$.
For a policy mixture $\bpi$, the induced marginal state distribution is denoted by $\bd[\bpi]$. Depending upon whether the policy class is continuous or discrete, $\bd[\bpi]$ is defined as either $\bd[\bpi](s) = \int_{\Pi}d[\pi](s)\bpi(\pi)d\pi$ or $\bd[\bpi](s) = \sum_{\pi \in \Pi}d[\pi](s)\bpi(\pi)$.
Inspired from Gini criterion \citep{breiman2017classification}, we define the diversity of discounted marginal state distribution over set of goal states as $\gI^{\gS^+}(\bpi) = -1/2 \sum_{s\in\gS^+}\bd[\bpi](s)^2$.
In this paper, we use terms discounted marginal state distribution and marginal state distribution interchangeably to mean the former unless explicitly mentioned.

\textbf{Expected Return.} Expected return is the discounted sum of rewards received by the agent when following policy $\pi$ for a fixed MDP $\gM(\gS, \gA, R, \gamma, \rho_0)$. Denote the expected return by $J^{\pi}_{\gM}$. For brevity, we use $J(\pi)$ instead of $J^{\pi}_{\gM}$. For a policy mixture $\bpi$, the expected return is denoted by $J(\bpi)$ and defined as $J(\bpi) = \E_{\pi\sim\bpi}\E_{(s_t,a_t,r_t,s_{t+1})\sim\pi}[\sum_{i=0}^\infty \gamma^{t}r_{t}] = 1/(1-\gamma) \sum_{s\in \gS} R(s)\bd[\bpi](s) = 1/(1-\gamma) \sum_{s\in \gS^+} \bd[\bpi](s)$. We also define $J_\gamma(\bpi) = (1-\gamma)J(\bpi) = \sum_{s\in \gS^+} \bd[\bpi](s)$.

\subsection{Optimization Objective}\label{sec:optobj} 
The objective is to learn a policy mixture that maximizes return and induces a marginal state distribution that is well dispersed over goal states (i.e. not concentrated on a few goal states).
Due to a discounting factor less than 1 or environment dynamics, maximizing return and inducing well dispersed marginal state distribution over goal states can turn out to be conflicting objectives as demonstrated via examples in Supplementary \ref{appendix:balancing_conflict}.

It is worth noting that the objectives optimized by entropic regularization techniques could benefit by increasing policy entropy even if it ends up assigning more weight to non-goal states. To avoid such undesirable outcomes, we adopt a balanced approach and 
 propose a maximization objective $\max_{\bpi\in\bPi} \gF(\bpi)$, where $\gF(\bpi) = J_\gamma(\bpi) + \gI^{\gS^+}(\bpi)$ i.e. the sum of return and diversity of induced marginal state distribution restricted to goal states. Thus, the objective is:
\begin{equation}\label{eqn:objective}
    \gF(\bpi) = \sum\limits_{s\in\gS^+} \left(\bd[\bpi](s) - \frac{1}{2}\bd[\bpi](s)^2\right)
\end{equation}
\rev{The specific choice of Gini criterion as a measure of diversity is justified by the concavity and bounded \emph{curvature constant (see $\mc{C}_\mc{Z}$ defined in Lemma \ref{lemma:objective_properties})} of the resultant objective function.} The above optimization objective captures both our goals: maximizing the return and dispersion of the state distribution over $\gS^+$. To see this, let us define $\mu := \sum_{s \in \gS^+}\bd[\bpi](s)/|\gS^+|$ as the average state probability, and $\eps(s) := \bd[\bpi](s) - \mu$, as the deviation from the average. Then,
\begin{eqnarray}
    \gF(\bpi) = |\gS^+|\mu - (1/2)\sum_{s\in \gS^+}\left(\eps(s) + \mu\right)^2 \quad\quad\quad& \nonumber \\
    = |\gS^+|\mu - (1/2)|\gS^+|\mu^2 - (1/2)\sum_{s \in \gS^+}\eps(s)^2&
\end{eqnarray}
where the second equality follows from $\sum_{s\in \gS^+}\eps(s) = 0$.
Therefore, for a fixed $\mu$, the objective increases when the vector of deviations becomes smaller in its Euclidean norm, and is maximized when the deviations are all zero i.e., for all $s \in \gS^+$, $\bd[\bpi](s) = \mu$.

Further, define for a fixed set of probabilities, $Z(c) := \sum\limits_{s\in\gS^+} \left(c\bd[\bpi](s) - \frac{1}{2}c^2\bd[\bpi](s)^2\right)$. Then, $Z'(c) > 0$ for some $c > 1$, when $\sum\limits_{s\in\gS^+}\bd[\bpi](s)  > \sum\limits_{s\in\gS^+}\bd[\bpi](s)^2$. This is always true when  $\sum_{s\in \gS^+}\bd[\bpi](s) \in (0,1)$ as $\bd[\bpi](s) < \bd[\bpi](s)^2$ for each non-zero $\bd[\bpi](s)$. Thus, uniformly increasing the probabilities in $\gS^+$ by some $c > 1$ (as long as the total probability is at most 1)   increases the objective.

We denote the optimal policy mixture for this objective by 
$\bpi^* \in \tn{argmax}_{\bpi\in\bPi} \gF(\bpi)$

Next, we outline key properties of the policy mixture class as well as the objective function. These properties are instrumental in proving convergence guarantees for the proposed algorithm.
\vspace{5pt}
\begin{lemma}\label{lemma:objective_properties}
    (proved in Supplementary \ref{appendix:objective_properties}) The following properties hold for $\bPi$ and $\gF$:
    \begin{enumerate}
        \item $\bPi$ is a convex set.
        \item $\gF$ is a concave function over $\bPi$.
        \item $C_\gF = \sup\limits_{\substack{\bpi_1,\bpi\in\bPi\\\lambda\in(0,1)\\\bpi_2=\bpi_1 +\lambda(\bpi-\bpi_1)}} \frac{2}{\lambda^2}\biggl[\gF(\bpi_1) + \langle\bpi_2-\bpi_1, \nabla_{\pi}\gF(\bpi_1)\rangle -\gF(\bpi_2)\biggr] < \infty$
    \end{enumerate}
\end{lemma}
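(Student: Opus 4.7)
The plan is to build all three parts on a single structural observation: the marginal visitation $\bd[\bpi](s) = \sum_{\pi \in \Pi} \bpi(\pi)\, d[\pi](s)$ (integral version in the continuous case) is a \emph{linear} functional of the mixture $\bpi$. Introducing the auxiliary map $u : \bPi \to \R^{|\gS^+|}$ given by $u(\bpi) := (\bd[\bpi](s))_{s\in\gS^+}$, which is therefore affine, lets me rewrite the objective as $\gF(\bpi) = g(u(\bpi))$, where $g(u) = \vone^\top u - \tfrac12 \|u\|_2^2$ is a concave quadratic on $\R^{|\gS^+|}$ with Hessian $-I$.

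Part~(1) is immediate: $\bPi = \Delta(\Pi)$ is the set of probability distributions over $\Pi$, and convex combinations of probability distributions are probability distributions. Part~(2) follows from Part~(1) and the factorization above: $g$ is concave (Hessian $-I$) and $u$ is affine, so $g \circ u = \gF$ is concave on the convex set $\bPi$.

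For Part~(3), the key leverage is that $g$ is an \emph{exact} quadratic, so its second-order Taylor expansion is exact:
\[
g(u_2) \;=\; g(u_1) + \langle \nabla g(u_1),\, u_2 - u_1 \rangle \;-\; \tfrac12 \|u_2 - u_1\|_2^2 .
\]
Applying the chain rule together with the affine structure of $u$, and using $\bpi_2 - \bpi_1 = \lambda(\bpi - \bpi_1)$ (so that $u(\bpi_2) - u(\bpi_1) = \lambda(u(\bpi) - u(\bpi_1))$), I would obtain
\[
\gF(\bpi_1) + \langle \bpi_2 - \bpi_1,\, \nabla_\bpi \gF(\bpi_1)\rangle - \gF(\bpi_2) \;=\; \tfrac{\lambda^2}{2}\,\|u(\bpi) - u(\bpi_1)\|_2^2 .
\]
Multiplying by $2/\lambda^2$ collapses the $\lambda$-dependence entirely, so the supremand defining $C_\gF$ reduces to $\|u(\bpi) - u(\bpi_1)\|_2^2$.

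To finish, I would bound this quantity uniformly using that $\bd[\bpi]$ is a probability distribution on $\gS$ (since $(1-\gamma)\sum_{t\geq 0}\gamma^t = 1$), so each $\bd[\bpi](s) \in [0,1]$ and $\sum_{s\in\gS^+}\bd[\bpi](s)^2 \leq \sum_{s\in\gS^+}\bd[\bpi](s) \leq 1$. Thus $\|u(\bpi)\|_2 \leq 1$ for every $\bpi \in \bPi$, and the triangle inequality gives $C_\gF \leq 4 < \infty$. There is no deep obstacle: the one mild subtlety is reading $\langle \bpi_2 - \bpi_1, \nabla_\bpi \gF(\bpi_1)\rangle$ as a directional derivative when $\Pi$ is large, but since $\gF$ depends on $\bpi$ only through the finite-dimensional vector $u(\bpi)$, this directional derivative is just $\langle \nabla g(u(\bpi_1)),\, u(\bpi_2) - u(\bpi_1)\rangle$ and the identification is unambiguous.
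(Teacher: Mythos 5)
Your proposal is correct and follows essentially the same route as the paper's proof: linearity of $\bd[\bpi]$ in $\bpi$, concavity via the (diagonal, negative semi-definite) Hessian of the quadratic $g$, and the exact second-order expansion of $g$ which makes the supremand collapse to $\sum_{s\in\gS^+}(\bd[\bpi](s)-\bd[\bpi_1](s))^2$, bounded uniformly because $\bd[\bpi]$ is a probability distribution. The only difference is the final constant (your triangle-inequality bound gives $C_\gF\le 4$ versus the paper's sharper bound via $\sum_s|a(s)-b(s)|\le 2$), which is immaterial since the lemma only asserts finiteness.
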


\section{Algorithm}\label{section:algorithm}

\begin{algorithm*}[ht]
\caption{Dense \& Diverse Goal Coverage (DDGC)}\label{alg:ddgc}
\textbf{Input:} $\gM$ (MDP), $h_d$ (State space discretization function, identity for discrete state spaces)\\
\textbf{Initialize:} $\bpi_0$ (policy mixture)\\
\textbf{Hyper-parameters:} $N_T$ (Number of Trajectories), $H$ (Horizon), $K$ (Mixture Size)
\begin{algorithmic}[1]
\setlength{\itemsep}{5pt}
\State $\Gamma_e \sim \textsc{RND}(\gM)$ \Comment{Sample data using an exploratory strategy}
\For{k in 1 $\cdots$ K}
    \State $\Gamma_k \sim \gM(\bpi_{k-1})$, $|\Gamma_k| = HN_T$ \Comment{\parbox[t]{4.5cm}{Sample data by following $\bpi$,\\$\Gamma_k^{(i)} \in \gS\times\gA\times\{0,1\}\times\gS\times[H]$,\\$[H]:=\{1,2,\cdots, H\}$}}
    \vspace{1em}
    \State $\hat{d}_k(s) = \frac{(1-\gamma)}{N_T(1-\gamma^H)}\sum\limits_{(s,a,r,s',t)\in \Gamma_k}\gamma^{t-1}\mathbb{I}(h_d(s')=h_d(s))$ \Comment{\parbox[t]{4cm}{Normalized discounted state visitation frequency from $\Gamma_k$}}
    \State $\hat{r}_k(s) = \begin{cases} 1-
    \hat{d}_k(s), & s\in\gS^+\\ 0, & s\in\gS^- \end{cases}$ \Comment{Custom reward}
    \State $\gD_k \gets \Gamma_e \cup \left(\bigcup_{i=1}^{k} \Gamma_i\right)$ \Comment{Combine all data for batch RL}
    \State $\gD'_k \gets \{(s,a,\hat{r}_k(s'),s')|(s,a,r,s',t) \in \gD_k\}$ \Comment{Update batch with custom reward}
    \State $\mu_k \gets \text{RL}(\gM, \gD'_k)$ \Comment{Batch RL}
    \State $\bmu_k(\pi) = \begin{cases} 1, & \pi = \mu_k\\ 0, & \pi \neq \mu_k \end{cases}$ \Comment{Policy mixture with all weight on $\mu_k$}
    \State $\bpi_k \gets (1-\lambda_k)\bpi_{k-1} + (\lambda_k)\bmu_k$, $\lambda_k = \frac{2}{k+1}$ \Comment{Mixture update}
\EndFor
\State \Return $\bpi_K$
\end{algorithmic}
\end{algorithm*}

\begin{algorithm*}[ht]
\caption{Fitted Actor Critic}\label{alg:FAC}
\textbf{Input:} $\gM$ (MDP), $\{s,a,r,s'\} \in\Gamma$ (Batch Data)\\
\textbf{Initialize:} $\pi_0$ (policy), $f_0$ (Q Value Approximator)\\
\textbf{Hyper-parameters:} $N_{FQI}$ (Number of Steps)
\begin{algorithmic}[1]
\vspace{0.5em}
\For{k in 1 $\cdots$ $N_{FQI}$}
\vspace{0.5em}
    \State $f_k \gets \argmin_{f\in\mc{F}} \sum\limits_{(s,a,r,s')\in\Gamma} \left(f(s,a)-[r+\gamma f_{k-1}(s',\pi(s'))]\right)^2$ \Comment{Updating Q Value Approximator}
    \vspace{0.5em}
    \State $\pi_k \gets \argmax_{\pi\in\Pi} \sum\limits_{(s,a,r,s')\in\Gamma} f_k(s,\pi(s))$ \Comment{Updating Policy}
\vspace{0.5em}
\EndFor
\State \Return $\bpi_K$
\end{algorithmic}
\end{algorithm*}

To optimize the objective defined in Equation \ref{eqn:objective}, we use Algorithm \ref{alg:ddgc} which is based on Frank–Wolfe algorithm \citep{pmlr-v28-jaggi13}. A policy mixture is randomly initialized and updated at each iteration.

The algorithm iteratively updates the policy mixture by adding a new policy and decaying the weights of existing policies in the mixture. The new policy is obtained by optimizing for a reward function that is dependent on the existing policy mixture. The core idea is to encourage visitation of less frequently visited goal states over more frequently visited goal states as per the existing policy mixture. This is achieved by assigning a reward on goal states which decreases monotonically with state visitation frequency but is always non-negative. It is also important to note that 0 reward is given for visiting non-goal states. Thus, reward on any goal state is always greater than any other non-goal state. This is a clear distinction from algorithms that provide exploration bonus or bonus aimed at encouraging stochasticity in the policies. Note that the reward function is fixed in each iteration and is derived from the estimate of goal state visitation frequency of the policy mixture obtained from the previous iteration. The particular choice of batch RL is useful as it requires state visitation frequency estimation for only the goal states encountered in the trajectories sampled from the policy mixture and not for all the states.

\rev{Further, the algorithm bootstraps with an exploration step to discover goal states and allow for a reasonable value of concentrability constant $\mathcal{C}$ (see Assumption \ref{asn:concentrability}). While the algorithm is agnostic to the choice of exploration strategy, we use RND - Random Network Distillation \citep{burda2018exploration} for efficient exploration.}

We analyze the algorithm and provide performance guarantees on policy mixture obtained at $K$th iteration. The following lemma bounds the sub-optimality gap $h(\bpi_K) = \gF(\bpi^*)-\gF(\bpi_k)$.
\vspace{5pt}
\begin{lemma}\label{lemma:wf_convergence}
    (proved in Supplementary \ref{appendix:wf_convergence}) If performance of the  policy $\bmu_k$ output by the RL algorithm using empirical estimate $\hat{d}$ in $k^{th}$ iteration of Algorithm \ref{alg:ddgc} is $\epsilon_k$ sub-optimal with probability $\geq 1-\delta_k$ over the choice of \ $\Gamma_k$, then $h(\bpi_K) \leq \frac{2C_\gF}{K+1} + \frac{2}{K(K+1)}\sum\limits_{k=1}^K k\epsilon_k$ holds with probability $\geq 1-\sum\limits_{k=1}^K \delta_k$.
\end{lemma}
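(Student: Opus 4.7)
The strategy is to apply the standard Frank--Wolfe convergence argument for concave maximization with an inexact linear oracle, instantiated with the curvature constant $C_\gF$ from Lemma \ref{lemma:objective_properties}. The key observation is that the RL subproblem with custom reward $\hat{r}_k$ is, up to the sampling error folded into $\epsilon_k$, precisely the linear-maximization oracle that Frank--Wolfe needs in the direction $\nabla\gF(\bpi_{k-1})$. Indeed, since $\bd[\bpi]$ is linear in $\bpi$ and $\gF(\bpi)=\sum_{s\in \gS^+}(\bd[\bpi](s)-\bd[\bpi](s)^2/2)$, the chain rule yields $\langle\bmu,\nabla\gF(\bpi_{k-1})\rangle = \sum_{s\in\gS^+}(1-\bd[\bpi_{k-1}](s))\,\bd[\bmu](s)$, which is proportional to the expected return of $\bmu$ under the state-only reward $r(s)=(1-\bd[\bpi_{k-1}](s))\mathbb{I}[s\in\gS^+]$. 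The $\epsilon_k$-suboptimality hypothesis therefore gives, with probability at least $1-\delta_k$, $\langle\bmu_k,\nabla\gF(\bpi_{k-1})\rangle\ge\langle\bpi^*,\nabla\gF(\bpi_{k-1})\rangle-\epsilon_k$.

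Instantiating the definition of $C_\gF$ with $\bpi_1=\bpi_{k-1}$, $\bpi=\bmu_k$, and $\lambda=\lambda_k$ makes $\bpi_2=\bpi_k$ exactly match the mixture update in Algorithm \ref{alg:ddgc}, and produces the descent inequality
\[
\gF(\bpi_k) \,\ge\, \gF(\bpi_{k-1}) + \lambda_k\langle\bmu_k-\bpi_{k-1},\nabla\gF(\bpi_{k-1})\rangle - (\lambda_k^2/2)\,C_\gF.
\]
Combining this with the approximate-optimality bound above and the concavity inequality $\langle\bpi^*-\bpi_{k-1},\nabla\gF(\bpi_{k-1})\rangle\ge h_{k-1}$ (from part 2 of Lemma \ref{lemma:objective_properties}) yields the per-step recursion $h_k \,\le\, (1-\lambda_k)\,h_{k-1} + \lambda_k\epsilon_k + (\lambda_k^2/2)\,C_\gF$.

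To finish, substitute $\lambda_k = 2/(k+1)$ and multiply both sides by $k(k+1)$, giving $k(k+1)\,h_k \le (k-1)k\,h_{k-1} + 2k\epsilon_k + 2kC_\gF/(k+1) \le (k-1)k\,h_{k-1} + 2k\epsilon_k + 2C_\gF$. Telescoping from $k=1$ to $K$ (the $h_{k-1}$ contribution vanishes at $k=1$) gives $K(K+1)\,h_K \le 2\sum_{k=1}^{K} k\epsilon_k + 2KC_\gF$, and dividing by $K(K+1)$ produces the stated bound. The probability statement follows by a union bound over the $K$ iterations. The main subtlety, and likely the only real obstacle, is the first step: the RL subroutine is actually fed the empirical reward $\hat{r}_k$ built from $\hat{d}_k$, so one must argue that an approximately optimal policy for the empirical problem is also approximately optimal for the \emph{population} linear objective $\langle\cdot,\nabla\gF(\bpi_{k-1})\rangle$. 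The extra $O(\sqrt{N_T^{-1}\log(K/\delta_d)})$ term appearing in Theorem \ref{thm:discrete_bound} is precisely the price of this translation, and it is natural to handle it when $\epsilon_k$ is instantiated rather than inside this lemma.
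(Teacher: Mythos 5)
Your proposal is correct and follows essentially the same route as the paper's proof: the inexact Frank--Wolfe descent inequality via $C_\gF$, the identification of $\langle\bmu,\nabla\gF(\bpi_{k-1})\rangle$ with the expected return under the reward $(1-\bd[\bpi_{k-1}](s))\mathbb{I}[s\in\gS^+]$, the concavity step, the same recursion and $k(k+1)$ telescoping, and the union bound. Your closing remark is also consistent with the paper: the empirical-versus-population reward discrepancy is deferred to the instantiation of $\epsilon_k$ (Lemma \ref{lemma:fqi_performance_difference} and Appendix \ref{appendix:error_in_d_approximation}) rather than handled inside this lemma.
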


The proof of Lemma \ref{lemma:wf_convergence} relies on the properties of Frank-Wolfe algorithm. Exploiting the smoothness of our objective $\mathcal{Z}$ (established in Lemma \ref{lemma:objective_properties}), we lower-bound the improvement at each iteration $k$ using the curvature constant $C_{\mathcal{Z}}$. The gradient of $\mathcal{Z}$ at the current mixture defines a linear reward function, casting the direction-finding step as a standard RL problem. Assuming the RL solution to be PAC, the suboptimality gap of a policy obtained after $K$ iterations of Frank-Wolfe algorithm is upper bounded using the union bound argument.

We choose FQI for Batch RL step in the proposed algorithm. Under assumptions \ref{asn:concentrability} and \ref{asn:inherent_error}, we bound the performance of policy which is obtained using FQI to optimize on estimated reward in lemma \ref{lemma:fqi_performance_difference}. These are standard assumptions used for FQI analysis as in \citet{agarwal2019reinforcement} and \citet{LVY19}.
\vspace{5pt}
\begin{assumption}\label{asn:concentrability}
    (Concentrability) There exists a constant $C$ such that for all policy mixtures $\bpi \in \bPi$ and for policy mixture $\bpi_k$ at every step $k$, we have for each $s\in\gS$ and $a\in\gA$
    \begin{equation*}
        \frac{\bd[\bpi(s)]\cdot p_\bpi(a|s)}{\bd[\bpi_k(s)]\cdot p_{\bpi_k}(a|s)} \leq C
    \end{equation*}
    where $p_\bpi(a|s)$ is the probability of taking action $a$ in state $s$ while following the policy mixture $\bpi$.
\end{assumption}
\vspace{5pt}
\begin{assumption}\label{asn:inherent_error}
    (Inherent Bellman Error) The following error bound holds for all $k$,
    \begin{equation*}
        \epsilon_\tn{approx} := \max\limits_{f\in\gF}\min\limits_{f'\in\gF}\|f'-\gT f\|^2_{2,\bpi_k}
    \end{equation*}
    where $\gT$ is the Bellman operator defined as $\gT f(s,a) := r(s,a)+\gamma\E_{s'\in P(.|s,a)} \max_{a'\in\gA} f(s',a')$.
\end{assumption}
\vspace{5pt}
\begin{lemma}\label{lemma:fqi_performance_difference}
        (proved in Supplementary \ref{appendix:fqi_performance_difference}) Performance of the policy obtained using FQI with estimated reward is $\epsilon_d$ sub-optimal with probability at least $1-(\delta_d+\delta)$ where,
\begin{align*}
    \epsilon_d & \leq \frac{\sqrt{C}}{(1-\gamma)^2} \sqrt{4\epsilon_\tn{approx}^2 + \frac{48\cdot214\cdot V_\tn{max}^4}{HN_T}\cdot\Psi(\delta) } + \frac{\gamma^{N_\tn{FQI}}V_{\max}}{1-\gamma} \\
    &\quad + \frac{1}{(1-\gamma)^2} \left(\gamma^H + \sqrt{\frac{\log \left(2/\delta_d\right)}{2N_T}}\right)
\end{align*}
and,
\begin{align*}
    \Psi(\delta) = \log\frac{28e(\text{dim}_\mc{F} + 1)N_\tn{FQI}}{\delta} + \text{dim}_\mc{F}\log (640HN_TV_{\max}^2)
\end{align*}
\end{lemma}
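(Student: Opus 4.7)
The strategy is to decompose $\epsilon_d = J_{r_k}(\pi^*_{r_k})-J_{r_k}(\hat\pi)$ into the suboptimality of FQI when run on the MDP with the empirical reward $\hat r_k$, plus the loss due to the reward mismatch between $\hat r_k$ and the ideal Frank-Wolfe reward $r_k(s)=(1-\bd[\bpi_{k-1}](s))\mathbb I[s\in\gS^+]$. The first piece supplies the first two terms of the stated bound; the reward-perturbation piece supplies the third term.

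For the FQI suboptimality under $\hat r_k$, I would directly invoke the analysis of \citet{LVY19} (as presented also in \citet{agarwal2019reinforcement}): under Assumptions \ref{asn:concentrability} and \ref{asn:inherent_error}, a Pollard-style uniform-convergence argument over $\mc{F}$ based on its pseudo-dimension $\tn{dim}_\mc{F}$ (failing with probability at most $\delta$) controls the empirical Bellman residual on the $HN_T$ transitions in $\Gamma_k$, yielding the $\frac{\sqrt{C}}{(1-\gamma)^2}\sqrt{4\epsilon_{\tn{approx}}^2+\frac{48\cdot214\cdot V_{\max}^4}{HN_T}\Psi(\delta)}$ term. The geometric contraction of the Bellman optimality operator across $N_{\tn{FQI}}$ iterates contributes the $\gamma^{N_{\tn{FQI}}}V_{\max}/(1-\gamma)$ term, covering the residual of running only finitely many iterations.

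For the reward-perturbation part, I would first bound $\|Q^*_{r_k}-Q^*_{\hat r_k}\|_\infty \leq \|r_k-\hat r_k\|_\infty/(1-\gamma)$ by a geometric-series expansion of the Bellman optimality equation, and then use the classical fact that a policy greedy w.r.t.\ a $Q$-function $\epsilon$-close to $Q^*_{r_k}$ in sup-norm has performance within $2\epsilon/(1-\gamma)$ of the optimum; composing these two factors yields the $1/(1-\gamma)^2$ prefactor on the reward error. To control $\|r_k-\hat r_k\|_\infty$, I split it into (i) a horizon-truncation bias: because the algorithm normalizes by $(1-\gamma)/(N_T(1-\gamma^H))$, the expectation $\E[\hat d_k(s)]$ equals the normalized $H$-truncated discounted state distribution, which differs from $\bd[\bpi_{k-1}](s)$ by at most $O(\gamma^H)$; and (ii) a Monte Carlo deviation: the $N_T$ trajectories are independent and each contributes an i.i.d.\ $[0,1]$-valued summary to $\hat d_k(s)$, so Hoeffding's inequality gives $|\hat d_k(s)-\E\hat d_k(s)| \leq \sqrt{\log(2/\delta_d)/(2N_T)}$ with probability at least $1-\delta_d$. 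Summing the two delivers the desired bound on $\|r_k-\hat r_k\|_\infty$.

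A union bound over the FQI uniform-convergence failure (probability $\delta$) and the Hoeffding failure (probability $\delta_d$) then gives the stated overall guarantee. The main obstacle I anticipate is maintaining a consistent norm across the two analyses: the FQI bound from \citet{LVY19} is naturally in the weighted $L^2$ norm under $\bd[\bpi_k]$, while the reward-perturbation analysis is cleanest in $L^\infty$; the concentrability constant $C$ of Assumption \ref{asn:concentrability} plays the role of the norm-conversion factor, and the $\sqrt{C}/(1-\gamma)^2$ prefactor in the first term reflects exactly this conversion together with the usual $Q$-to-$V$ performance-difference factor. A secondary subtlety is that the per-state Hoeffding bound suffices because only states actually sampled in $\Gamma_k$ enter the FQI loss through $\hat r_k$, so no additional logarithmic factor in $|\gS^+|$ is incurred.
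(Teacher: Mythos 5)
Your proposal is correct in substance and shares most of its machinery with the paper's proof: the pseudo-dimension/Pollard bound from \citet{LVY19} for the Bellman residual (giving the $\Psi(\delta)$ term), the $\gamma^{N_{\tn{FQI}}}V_{\max}$ initialization term, the performance-difference lemma with concentrability supplying the $\sqrt{C}/(1-\gamma)$ conversion, and an identical treatment of $\|r_k-\hat r_k\|_\infty$ as horizon-truncation bias $\gamma^H$ plus a per-trajectory Hoeffding deviation $\sqrt{\log(2/\delta_d)/(2N_T)}$ (the paper's Appendix E). Where you genuinely diverge is in how the reward error is composed with the FQI analysis. The paper does \emph{not} first bound FQI's suboptimality on the $\hat r_k$-MDP and then transfer via a simulation-lemma argument on $\|Q^*_{r_k}-Q^*_{\hat r_k}\|_\infty$; instead it compares the FQI iterates $f_i$ (trained on $\hat r_k$) directly against $Q^*$ for the \emph{true} reward $r_k$, so the per-step reward discrepancy $\|R\odot(r_k-\hat r_k)\|_{2,\beta_P^j}$ appears as an extra additive term inside the error-propagation recursion and is summed geometrically over iterations, producing one factor of $1/(1-\gamma)$ there and a second from the performance-difference step. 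Your route buys a cleaner modular separation (one can cite the $\hat r_k$-FQI guarantee as a black box) at the cost of the norm-mismatch bookkeeping you correctly flag — the FQI guarantee lives in a weighted $L^2$ norm while your $Q^*$-perturbation bound is in sup-norm, and you must also be careful that the FQI output is greedy with respect to $f_{N_{\tn{FQI}}}$ rather than $Q^*_{\hat r_k}$, so the "greedy w.r.t.\ an $\epsilon$-close $Q$" step has to be applied to the composed error $\|f_{N_{\tn{FQI}}}-Q^*_{r_k}\|$, which is exactly what the paper's single interleaved recursion computes. Both routes yield the stated bound up to universal constants; your decomposition would give a slightly larger constant on the reward-error term (a factor of $2$ from the two-sided value perturbation), which is immaterial here.
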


Lemma \ref{lemma:fqi_performance_difference} bounds the error of the batch RL step (solved via FQI) by decomposing it into the reward estimation error (bounded via Hoeffding’s inequality on $N_T$ trajectories) and the offline RL error.
% , which depends on the pseudo-dimension of the function class $\mathcal{F}$ and decays exponentially with the number of FQI iterations $N_{\text{FQI}}$.
\vspace{5pt}
\begin{theorem}\label{thm:discrete_bound}
Define the optimality gap as $h_k = \gF(\bpi^*) - \gF(\bpi_k)$ where $\bpi^* \in \argmax_{\bpi\in \bPi} \gF(\bpi)$. Then with probability greater than or equal to $1-(\delta_d + \delta)$,
\begin{align*}
    h(\bpi_K) \leq &\frac{\sqrt{C}}{(1-\gamma)^2} \sqrt{4\epsilon_\tn{approx}^2 + \frac{48\cdot214\cdot V_\tn{max}^4}{HN_T}\cdot\Psi(\delta/K) } + \frac{\gamma^{N_\tn{FQI}}V_{\max}}{1-\gamma} + \frac{2C_\gF}{K+1}\\
    &+ \frac{1}{(1-\gamma)^2} \left(\gamma^H + \sqrt{\frac{\log \left(2K/\delta_d\right)}{2N_T}}\right)
\end{align*}
\end{theorem}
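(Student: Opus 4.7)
The plan is to combine the Frank--Wolfe style convergence from Lemma \ref{lemma:wf_convergence} with the per-iteration FQI sub-optimality bound from Lemma \ref{lemma:fqi_performance_difference}, tied together by a union bound across the $K$ outer iterations of Algorithm \ref{alg:ddgc}.

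First, I would re-apply Lemma \ref{lemma:fqi_performance_difference} to each of the $K$ iterations, but with its failure probability budget scaled down by a factor of $K$: specifically, I would instantiate the lemma with $\delta \gets \delta/K$ and $\delta_d \gets \delta_d/K$. This makes the per-iteration sub-optimality $\epsilon_k$ at step $k$ satisfy
\[
\epsilon_k \leq \frac{\sqrt{C}}{(1-\gamma)^2}\sqrt{4\epsilon_\tn{approx}^2 + \frac{48\cdot 214\cdot V_\tn{max}^4}{HN_T}\Psi(\delta/K)} + \frac{\gamma^{N_\tn{FQI}} V_{\max}}{1-\gamma} + \frac{1}{(1-\gamma)^2}\left(\gamma^H + \sqrt{\frac{\log(2K/\delta_d)}{2N_T}}\right)
\]
with probability at least $1 - (\delta/K + \delta_d/K)$. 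A union bound over $k = 1, \dots, K$ then yields that the same uniform upper bound $\epsilon_k \leq \epsilon$ holds for every iteration simultaneously with probability at least $1 - (\delta + \delta_d)$, where $\epsilon$ is precisely the right-hand side above.

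Next, I would plug this uniform bound into the Frank--Wolfe recursion of Lemma \ref{lemma:wf_convergence}. Since $\epsilon_k \leq \epsilon$ does not depend on $k$, the weighted average collapses:
\[
\frac{2}{K(K+1)}\sum_{k=1}^{K} k\,\epsilon_k \;\leq\; \frac{2\epsilon}{K(K+1)}\sum_{k=1}^{K} k \;=\; \epsilon.
\]
Thus $h(\bpi_K) \leq \frac{2C_\gF}{K+1} + \epsilon$, which is exactly the stated bound. The failure probability accounting is handled by the single union bound above, so the entire claim holds with probability at least $1 - (\delta + \delta_d)$.

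The only mildly delicate step is the probability budget split: one must be careful that the $\delta_k$ appearing in Lemma \ref{lemma:wf_convergence} corresponds to the \emph{joint} failure event across iterations (so that the single outer bound $h(\bpi_K) \leq \frac{2C_\gF}{K+1} + \frac{2}{K(K+1)}\sum_k k\epsilon_k$ is meaningful), and that choosing $\delta_k = (\delta + \delta_d)/K$ at each step makes $\sum_k \delta_k = \delta + \delta_d$, which matches the probability stated in the theorem. No new analysis is required beyond these two lemmas, so I expect no significant obstacle; the proof is essentially bookkeeping.
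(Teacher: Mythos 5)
Your proposal is correct and follows exactly the route the paper intends: the paper's proof is literally the one-line remark that the theorem ``follows directly'' from Lemmas \ref{lemma:wf_convergence} and \ref{lemma:fqi_performance_difference}, and the $\Psi(\delta/K)$ and $\log(2K/\delta_d)$ terms in the statement confirm the per-iteration budget split $\delta/K$, $\delta_d/K$ with a union bound, followed by the collapse $\frac{2}{K(K+1)}\sum_k k\epsilon = \epsilon$ that you carry out. You have simply written out the bookkeeping the paper leaves implicit.
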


\begin{proof}
    Follows directly from Lemmas \ref{lemma:wf_convergence} and \ref{lemma:fqi_performance_difference}.
\end{proof}

Theorem \ref{thm:discrete_bound} provides a PAC guarantee, decomposing the optimality gap into intuitive components. The term $O(1/K)$ reflects the convergence of the Frank-Wolfe optimization, which vanishes asymptotically. The remaining terms represent the performance floor determined by finite resources and structural choices: the statistical error scales with $O(1/\sqrt{N_T}) + O(\gamma^H) + O(\gamma^{N_\tn{FQI}}) + O(\sqrt{log(HN_T) \times N_\tn{FQI}/(HN_T)})$ due to finite sampling. Crucially, $\epsilon_{\text{approx}}$ denotes the Inherent Bellman Error which is a fixed structural property of the chosen action-value predictor class $\mathcal{F}$. Thus, the theorem guarantees that with a sufficiently expressive function class and adequate static budgets ($N_T, H, N_{\text{FQI}}$), the algorithm converges to a near-optimal solution with high probability.

\rev{It is worth noting that concentrability (Assumption \ref{asn:concentrability}) is practically realized through initial exploration and use of all trajectories accumulated till the execution of batch RL subroutine. Without accumulation, as $\bpi_{k-1}$ becomes deterministic, $p(s, a) \to 0$ for unvisited regions, causing $C \to \infty$ and loosening the error bounds. Thus, accumulation stabilizes $C$.}

\rev{\textbf{Continuous state-action spaces.} In continuous state spaces, diversity in goal state visitation is not just measured by how often different goal states were visited but also by how diverse the visited goal states themselves are. We adapt DDGC for continuous state-action spaces by using continuous version of FQI (also called Fitted Actor Critic - Algorithm \ref{alg:FAC}) proposed by \citet{antos2007fitted} and using state discretization for $\hat{d}_k$ estimation which is a valid procedure as it treats the total space as a collection of distinct, non-overlapping regions of measurable volume.}
A formal analysis of the continuous algorithm can be constructed in a manner that closely parallels the one for the discrete setting. A full exposition is omitted to avoid redundancy.

\section{Experiments}\label{section:experiments}

To evaluate the performance of algorithm empirically, we perform 2 sets of experiments. Experiments on controlled synthetic environments are provided in Section \ref{sec:controlled_synthetic_environments}. These are discrete MDPs designed to carefully study different aspects of the proposed algorithm and comparison with existing algorithms. Section \ref{sec:rl_benchmarks} contains experimental results on MuJoCo environments. Specifically, we perform experiments on 4 different Brax \citep{brax2021github} environments adapted for multi-goal settings by \citet{bortkiewicz2025accelerating}.

\subsection{Controlled Synthetic Environments}\label{sec:controlled_synthetic_environments}
\begin{wrapfigure}{r}{0.5\textwidth}
% \begin{figure}[h]
    \centering
    \includegraphics[width=\linewidth]{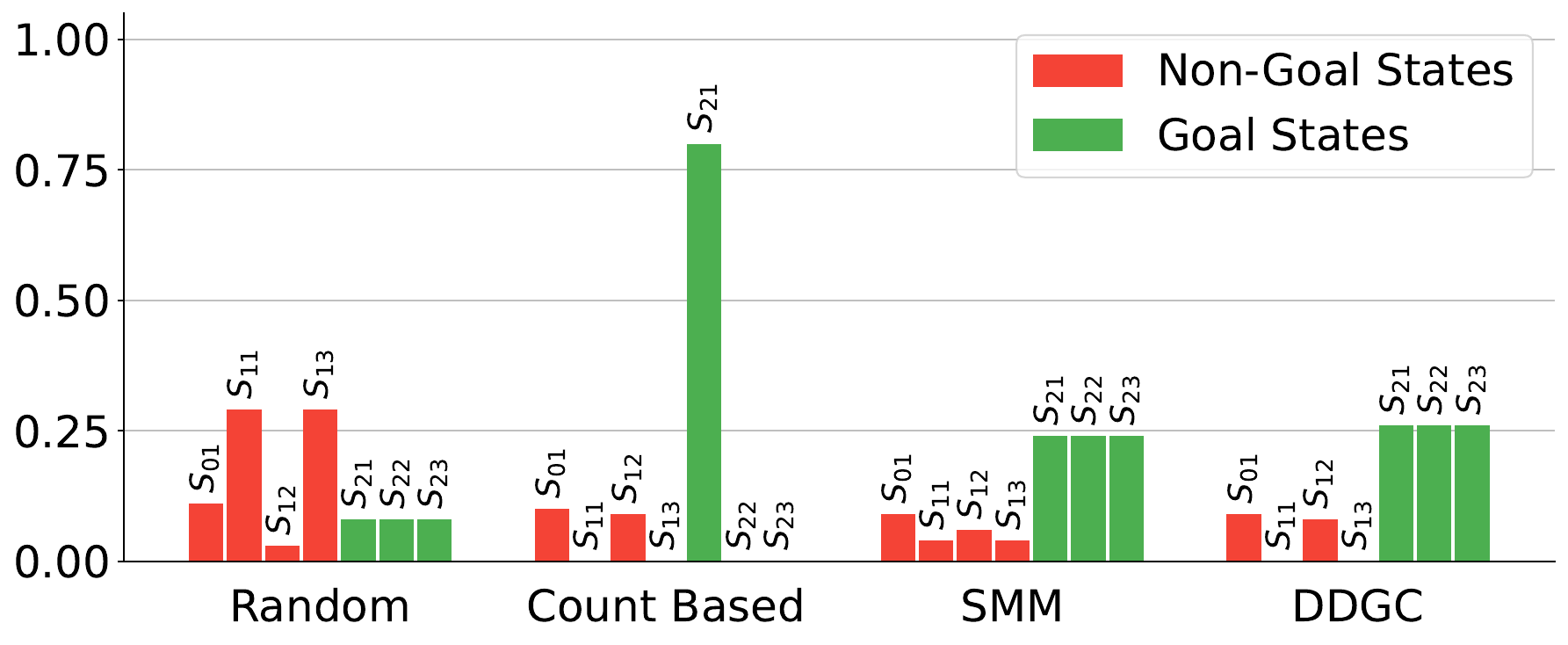}
    \caption{For the MDP in Figure \ref{figure:mdp}, we compare the discounted marginal state distribution induced by different algorithms over 7 states in the MDP. Each bar indicates the induced probability mass over a state. Green bars indicate discounted marginal state probability of different goal states and red bars indicate those of the non-goal states.}
    \label{fig:tabular_results}
\end{wrapfigure}

To empirically validate the importance of objective function, we conduct experiments for tabular settings on MDP defined in Figure \ref{figure:mdp}. The discounted marginal state distribution (probability mass function) induced by policies learnt via different algorithms can be visualized in Figure \ref{fig:tabular_results}.

DDGC achieves optimal performance by spreading uniform mass over goal states without compromising on the return (which is proportional to sum of probability mass over goal states). With count based exploration in Q-Learning, once a high return path to goal is found, the same is exploited with no consideration of visiting diverse goal states. A random policy on the other hand puts no consideration on finding a high return policy. The SMM objective encourages the agent to visit all states almost uniformly. More precisely, we use the target density function $d(s)=\exp(r(s))$ as proposed in \cite{lee2019efficient}. Clearly, SMM assigns a positive discounted marginal state distribution to each state, including the non-goal states which is not an optimal choice. While the performance difference between SMM and DDGC might not appear very significant in this small scale setup, once can imagine how this difference could become more prominent for complex environments. We defer the experimental details to Supplementary \ref{app:tabular_experiments}.

% \begin{figure*}[ht]
%     \centering
%     \begin{subfigure}{0.3\textwidth}
%     \includegraphics[width=\textwidth]{assets/return.pdf}
%     \subcaption{Return}
%     \end{subfigure}
%     \begin{subfigure}{0.3\textwidth}
%     \includegraphics[width=\textwidth]{assets/partial.pdf}
%     \subcaption{Partial Entropy}
%     \end{subfigure}
%     \begin{subfigure}{0.3\textwidth}
%     \includegraphics[width=\textwidth]{assets/gini.pdf}
%     \subcaption{Modified Partial Gini Criterion}
%     \end{subfigure}
%     \caption{(a) Plot of discounted return by learnt policies; (b) Plot of partial entropy of discounted marginal state distribution over discretized state space of learnt policies; (c) Plot of modified partial gini criterion of discounted marginal state distribution over discretized state space of learnt policies. In each of the plots, values are averaged over 5 runs and normalized for each environment. Higher value is better for all 3 metrics visualized in the plots above.}
%     \label{fig:rl_benchmarks}
% \end{figure*}

\begin{figure*}[ht]
    \centering
    \includegraphics[width=\linewidth]{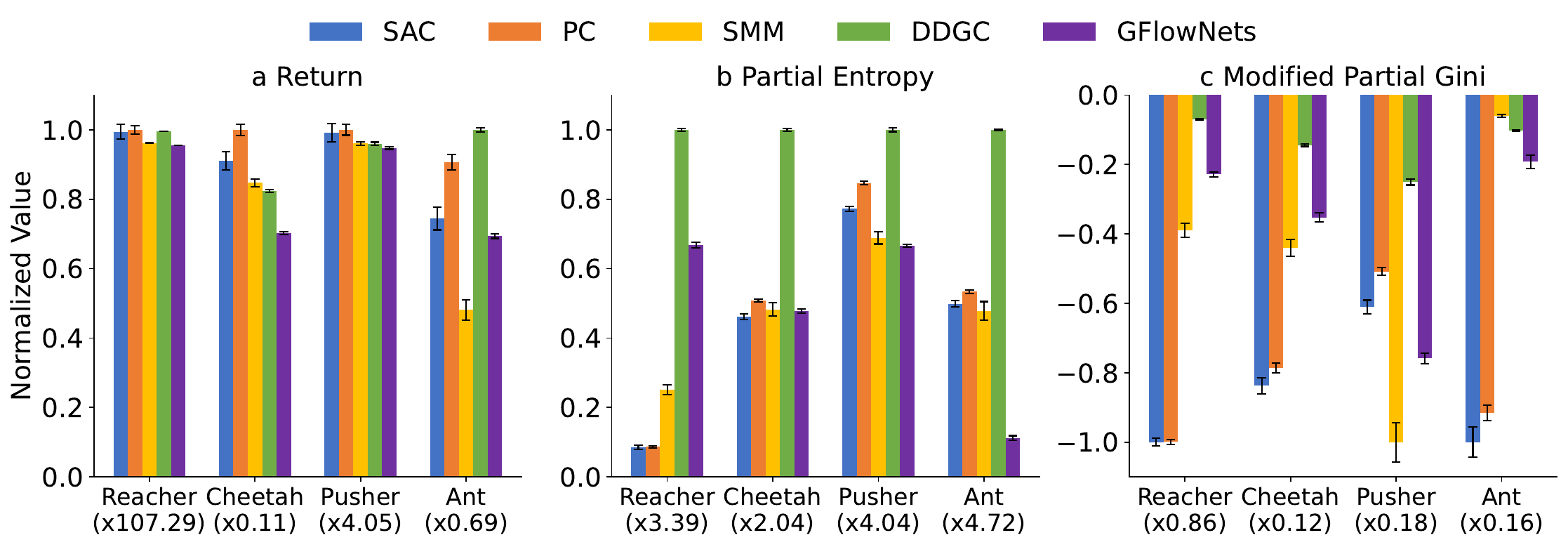}
    \caption{(a) Plot of discounted return from learnt policies; (b) Plot of partial entropy of discounted marginal state distribution over discretized state space of learnt policies; (c) Plot of modified partial gini criterion of discounted marginal state distribution over discretized state space of learnt policies. In each of the plots, values are averaged over 7 runs and normalized for each environment (normalization constant is present along with the x-label). Higher value is better for all 3 metrics.}
    \label{fig:rl_benchmarks}
\end{figure*}

\subsection{RL Benchmarks}\label{sec:rl_benchmarks}
\begin{wrapfigure}{r}{0.5\textwidth}
    \centering
    \includegraphics[width=\linewidth]{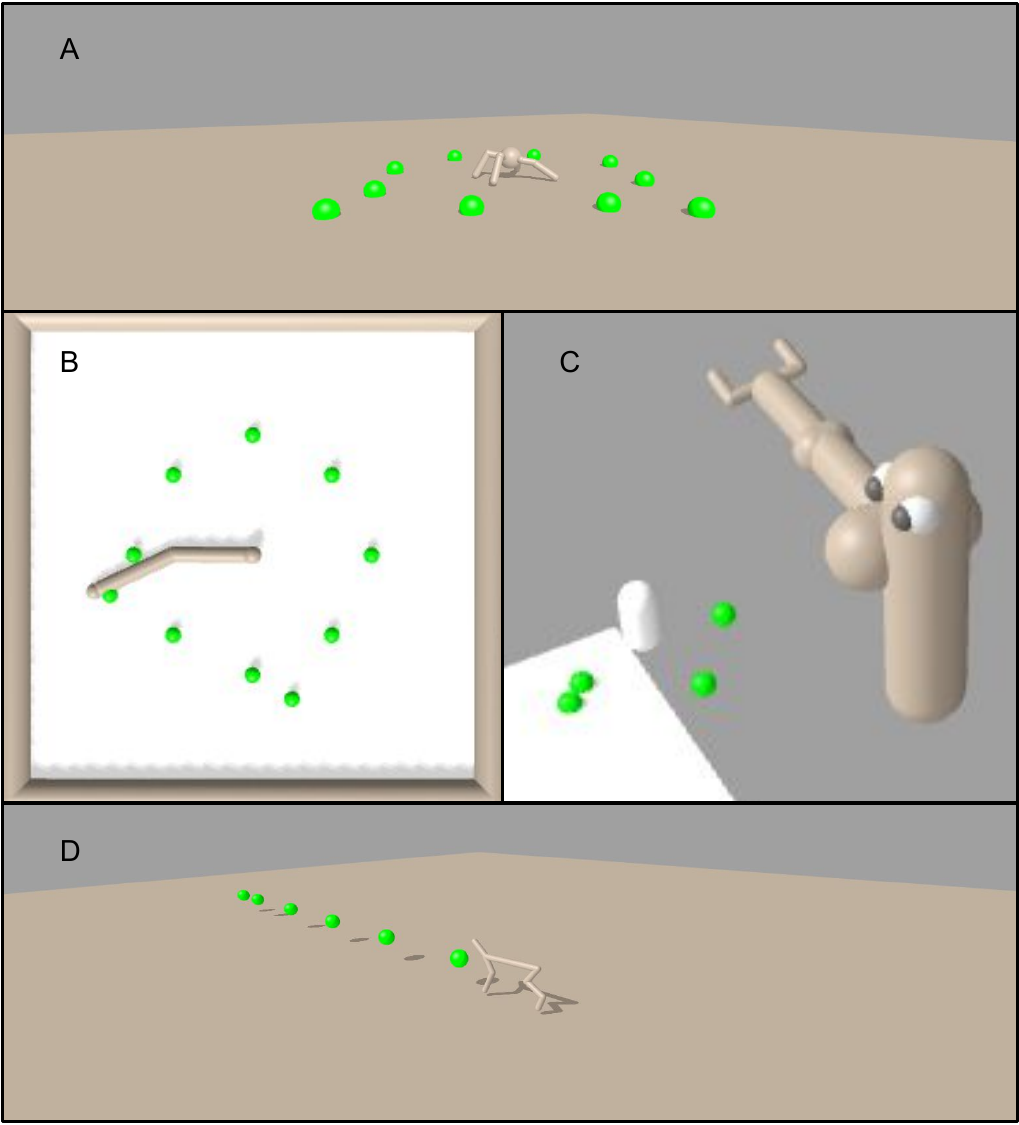}
    \caption{All 4 MuJoCo environments (A. ant, B. reacher, C. pusher, D. half cheetah) have multiple goal regions indicated by green spheres.}
    \label{fig:env_ss}
\end{wrapfigure}
We also evaluate our algorithm and benchmark against SAC, Pseudo Counts, SMM and GFlowNet baselines on 4 MuJoCo environments - reacher, pusher, ant and half cheetah environments. Pseudo Counts and SMM use SAC as the underlying RL algorithm similar to \citet{lee2019efficient}.
We use the multi goal setup in line with JaxGCRL \citep{bortkiewicz2025accelerating} with sparse rewards but without goal conditioning. In other words, rewards are binary and the observations do not contain any information about the goal. The MDP is multi-goal and the set of goal states comprises all states that satisfy a certain criteria (for example, distance from one of the pre-defined points is less than a certain threshold). We defer rest of the experimental details related to the environments and algorithm implementations to Supplementary \ref{app:jaxgcrl_experiments}.
Due to large state space, we cannot visualize the probability distribution induced over all the states unlike the synthetic MDP in Section \ref{sec:controlled_synthetic_environments}. Thus, we empirically compute the return and diversity in visited goal states for each policy. We roll out trajectories using learnt policies (or policy mixtures) with fixed horizon length. To estimate the discounted marginal state distribution, we discretize the state space similar to \citet{lee2019efficient}. We use the sampled trajectories to empirically compute the following statistics:

\begin{itemize}
    \item \textbf{Return.} This is the empirical estimation of discounted return by following a policy (or a mixture thereof).
    \item \textbf{Partial Entropy.} Entropy for a distribution $d(s)$ is defined as $\sum_{s\in\gS} -d(s)\log d(s)$. We, however are interested only in the diversity over goal states. So, the quantity of interest is $\sum_{s\in\gS^+} -d(s)\log d(s)$ which we call the partial entropy. We compute partial entropy of discounted marginal state distribution over discretized state space.
    \item \textbf{Modified Partial Gini Criterion.} Gini Criterion for a distribution $d(s)$ is usually defined as $1-\sum_{s\in\gS}d(s)^2$. We, however are interested only in the diversity over goal states. So, the quantity of interest is $1-\sum_{s\in\gS^+}d(s)^2$ which we call partial gini criterion. For better visualization, we get rid of the constant and compute $-\sum_{s\in\gS^+}d(s)^2$ instead and call it modified partial gini criterion.
\end{itemize}

\textit{Return} is a direct indicator of how densely the goals are covered by a policy. For diversity, different metrics can be used. We use two such metrics, \textit{Partial Entropy} and \textit{Modified Partial Gini Criterion}. The experimental results and comparison with baselines is visualized in Figure \ref{fig:rl_benchmarks}.

DDGC consistently matches the best performing algorithm in terms of return. At the same time, DDGC also achieves a policy with high diversity in goal states visited as is indicated by Partial Entropy as well as Modified Partial Gini Criterion. Further, DDGC is computationally efficient to run as shown by the average wall clock time taken to run experiments reported in Supplementary \ref{app:compreq}.

\section{Conclusion}\label{section:conclusion}

Our work studies the problem of learning a policy mixture for multi-goal RL to frequently visit the goal states but also visit a diverse set of goal states. With this motivation, we design an objective function that maximizes the goal state visitation frequency as well the diversity of goal states visited. %
We optimize this objective function using the Frank-Wolfe method, developing novel algorithm which solves an offline RL subproblem at each step.
 We provide theoretical guarantees on convergence of the algorithm and we extend the algorithm for continuous state-action spaces.
We also evaluate the algorithm on a small synthetic MDP to highlight the importance of the proposed objective function. We also perform experiments on MuJoCo environments which clearly demonstrate the improvement in goal coverage without losing much on the frequency of visitation. As our algorithm relies on multiple calls to RL subroutine (as is also the case with existing works on distribution matching), this leads to an increased computational requirement for training. Future works in this direction can further advance this area with advent of more efficient algorithms for reaching diverse goals.

%%%%%%%%%%%%%%%%%%%%%%%%%%%%%%%%%%%%%%%%%%%%%%%%%%%%%%%%%%%%%%%%
%% Appendices
%%%%%%%%%%%%%%%%%%%%%%%%%%%%%%%%%%%%%%%%%%%%%%%%%%%%%%%%%%%%%%%%
% \appendix

% \section{The first appendix}
% \label{sec:appendix1}
% This is an example of an appendix. 

% \noindent \textbf{Note:} Appendices appear before the references and are viewed as part of the ``main text'' and are subject to the 8--12 page limit, are peer reviewed, and can contain content central to the claims of the paper. 

% \section{The second appendix}
% \label{sec:appendix2}
% This is an example of a second appendix. If there is only a single section in the appendix, you may simply call it ``Appendix'' as follows:

% \section*{Appendix}
% % No label, since this can't be referenced meaningfully with \ref{}.
% This format should only be used if there is a single appendix (unlike in this document).

% \subsubsection*{Acknowledgments}
% \label{sec:ack}
% Use unnumbered third level headings for the acknowledgments. All acknowledgments, including those to funding agencies, go at the end of the paper. Only add this information once your submission is accepted and deanonymized. The acknowledgments do not count towards the 8--12 page limit.

%%%%%%%%%%%%%%%%%%%%%%%%%%%%%%%%%%%%%%%%%%%%%%%%%%%%%%%%%%%%%%%%
%% NOTE: THIS MARKS THE END OF THE "MAIN TEXT"
%%%%%%%%%%%%%%%%%%%%%%%%%%%%%%%%%%%%%%%%%%%%%%%%%%%%%%%%%%%%%%%%

%%%%%%%%%%%%%%%%%%%%%%%%%%%%%%%%%%%%%%%%%%%%%%%%%%%%%%%%%%%%%%%%
%% Bibliography
%%%%%%%%%%%%%%%%%%%%%%%%%%%%%%%%%%%%%%%%%%%%%%%%%%%%%%%%%%%%%%%%
\bibliography{main}
\bibliographystyle{rlj}

%%%%%%%%%%%%%%%%%%%%%%%%%%%%%%%%%%%%%%%%%%%%%%%%%%%%%%%%%%%%%%%%
% AUTHOR: If your paper has no supplementary materials, you may 
%         comment out the line below, which creates the title for
%         the supplementary materials.
%%%%%%%%%%%%%%%%%%%%%%%%%%%%%%%%%%%%%%%%%%%%%%%%%%%%%%%%%%%%%%%%
\beginSupplementaryMaterials
\section{Balancing Return and Goal State Dispersion}\label{appendix:balancing_conflict}

The return maximization objective and goal state dispersion objective may not necessarily align.
Particularly, we highlight the following two underlying reasons that may lead to this misalignment:
\begin{enumerate}
    \item \textbf{Discounting.} Because of discounting, the return maximization objective  encourages visitation of goal states that are closer to the start states over the ones that are farther. This is demonstrated via a simple example in Figure \ref{figure:mg_conflict}. For a discount factor value $<1$, the highest expected return would be given by a policy that takes action $A_1$ in state $S_{01}$. Maximizing the proposed objective, however, leads to a policy (or a mixture) that visits all the goal states with non-zero probability. Further, for $\gamma=1$, it can be observed that the objective is maximized when all goal states are visited equally likely as the objectives align in this case.
    \item \textbf{Environment Dynamics.} The return maximization and diverse goal visitation objectives may also conflict because of the environment dynamics. Figure \ref{figure:mg2_conflict} illustrates this with an example. In this example, even with $\gamma=1$, the two objectives do not align. While the highest return is obtained by exploiting smaller loop comprising only the goal states, a more diverse goal coverage can be obtained by also visiting goal states in the loop which contains a non-goal state.
\end{enumerate}

\begin{figure}[ht]
    \centering
    \begin{subfigure}[t]{0.3\textwidth}
        \includegraphics[width=\textwidth]{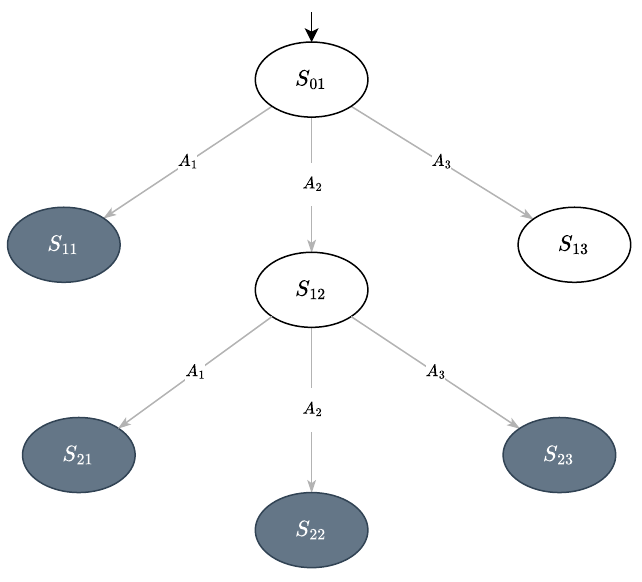}
        \caption{}
    \end{subfigure}
    \hfill
    \begin{subfigure}[t]{0.3\textwidth}
        \includegraphics[width=\linewidth]{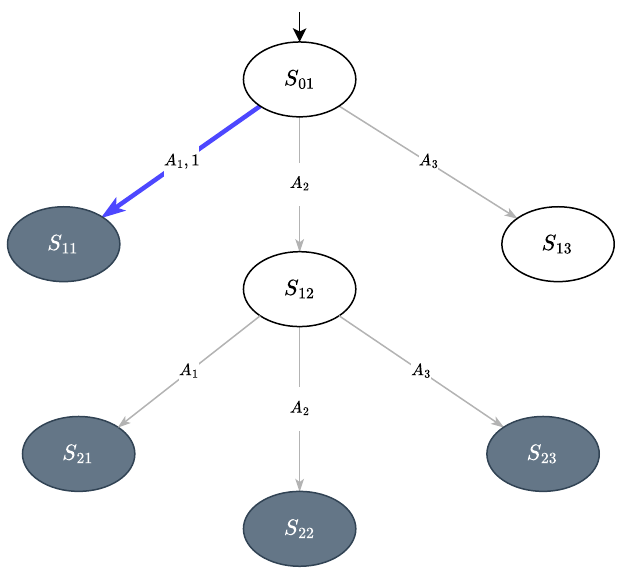}
        \caption{}
    \end{subfigure}
    \hfill
    \begin{subfigure}[t]{0.3\textwidth}
        \includegraphics[width=\linewidth]{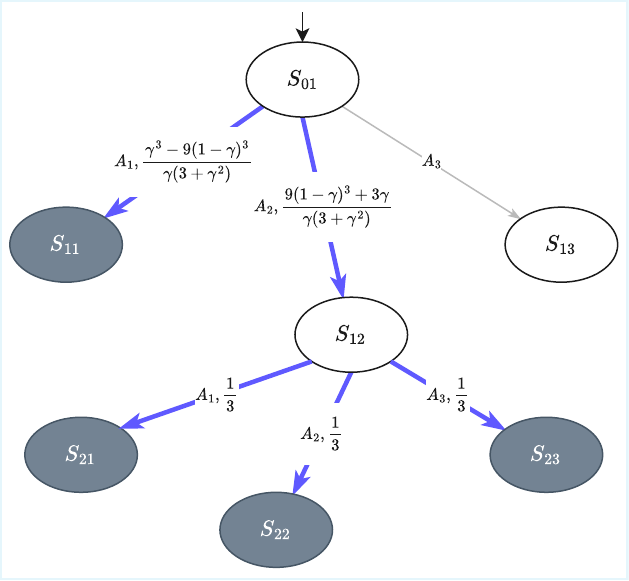}
        \caption{}
    \end{subfigure}

    \caption{(a) A continuing deterministic MDP with a start state, and highlighted states as the goal states. The terminal states act as sink states i.e. any action taken in these states leads back to the same state. (b) Marginal state distribution with the highest expected return as it is concentrated on the closest goal state to avoid higher discounting penalty. (c) Marginal state distribution that is well dispersed across goal states but compromises on the expected return by incurring larger discounting penalty.}\label{figure:mg_conflict}
\end{figure}

\begin{figure}[ht]
    \centering
    \begin{subfigure}[t]{0.3\textwidth}
        \includegraphics[width=\textwidth]{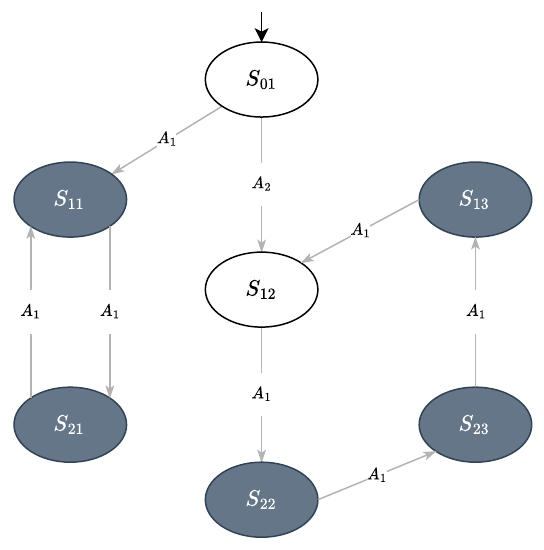}
        \caption{}
    \end{subfigure}
    \hfill
    \begin{subfigure}[t]{0.3\textwidth}
        \includegraphics[width=\linewidth]{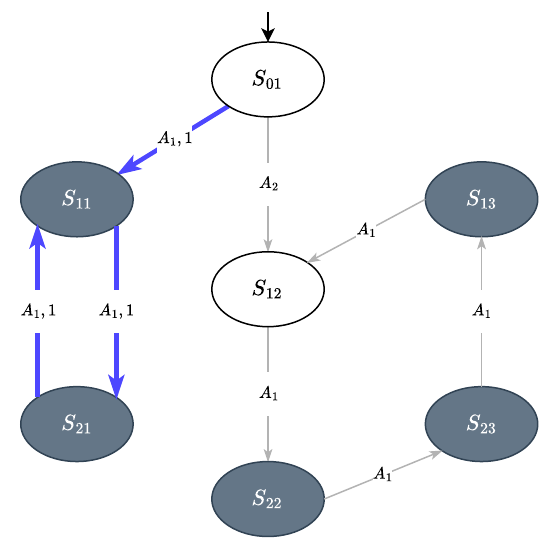}
        \caption{}
    \end{subfigure}
    \hfill
    \begin{subfigure}[t]{0.3\textwidth}
        \includegraphics[width=\linewidth]{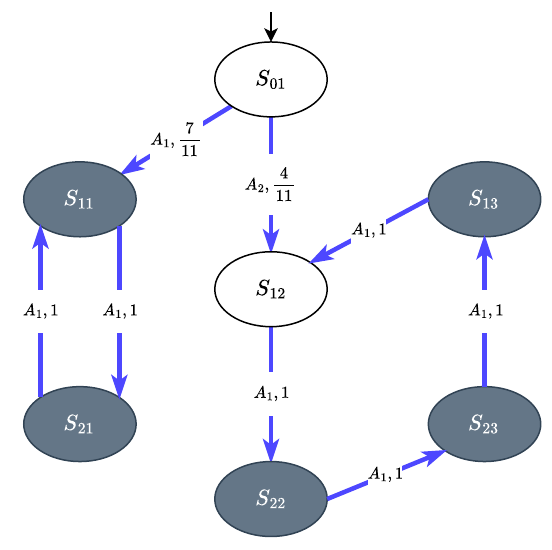}
        \caption{}
    \end{subfigure}

    \caption{(a) A continuing deterministic MDP with a start state, and highlighted states as the goal states. (b) Marginal state distribution with the highest expected return as it is concentrated densely on the goal state by avoiding visitation on non-goal state. (c) Marginal state distribution that is obtained for $\gamma=1$ by optimizing for the DDGC objective is well dispersed across goal states but compromises on expected return by visiting one non-goal state.}\label{figure:mg2_conflict}
\end{figure}

\section{Proof of Lemma \ref{lemma:objective_properties}}\label{appendix:objective_properties}

\textbf{1.} $\bPi$ is a convex set.

\begin{proof}
    $\bPi$ is convex as a convex combination of two valid probability distributions over $\Pi$ is another valid probability distribution over $\Pi$ and thus belongs to $\bPi$.
\end{proof}

\textbf{2.} $\gF$ is a concave function over $\bPi$.

\begin{proof}
Define $d(s) = \bd[\bpi](s)$. Let $\gG(d) = \sum_{s\in\gS^+} \left[d(s) - \frac{1}{2}d(s)^2\right]$.

We first show that $\gG$ is concave over the space of all valid probability mass functions $d\in \Delta \gS$ by showing that the Hessian of $\gG$ is negative semi-definite.

\begin{equation*}
    \frac{\delta G}{\delta d(s_i)} = R(s_i)\left[1-d(s_i)\right]
\end{equation*}

\begin{equation*}
    H_{ij} = \frac{\delta^2 G}{\delta d(s_i)\delta d(s_j)} = \begin{cases}
 -R(s_i)& \text{if } s_i = s_j \\\\
 
 0 & \text{if } s_i \neq s_j \end{cases}
\end{equation*}

For any non-zero $v \in \sR^{|\gS|}$,
\begin{align*}
    v^T H_{ij} v &= \sum\limits_i\sum\limits_j v_i H_{ij} v_j = \sum\limits_i v_i^2 H_{ii} + \sum\limits_{i\neq j} v_i H_{ij} v_j = -\sum\limits_i v_i^2 R(s_i) \leq 0
\end{align*}

Thus, $v^T H_{ij} v \leq 0$. So, $H$ is negative semi-definite.

\begin{align*}
    \gF(\bpi_1) &= \gG(\bd[\bpi_1]) = \gG(d_1)\\
    \gF(\bpi_2) &= \gG(\bd[\bpi_2]) = \gG(d_2) \\
    \gF(\alpha\bpi_1 + (1-\alpha)\bpi_2) &= \gG(\alpha\bd[\bpi_1] + (1-\alpha)\bd[\bpi_2]) = \gG(\alpha d_1 + (1-\alpha) d_2)
\end{align*}

Since, we have shown $\gG$ to be concave, from the above argument it follows that $\gF$ is concave as well.

\end{proof}

\textbf{3.} $C_\gF = \sup\limits_{\substack{\bpi_1,\bpi\in\bPi\\\lambda\in(0,1)\\\bpi_2=\bpi_1+\lambda(\bpi-\bpi_1)\\
}} \frac{2}{\lambda^2}\left[\gF(\bpi_1) + \langle\bpi_2-\bpi_1, \nabla_{\pi}\gF(\bpi_1)\rangle -\gF(\bpi_2)\right] < \infty$

\begin{equation*}
    \frac{\delta\gF}{\delta\bpi(\pi)} = \sum\limits_{s\in\gS^+} d[\pi](s)\left(1-\bd[\bpi](s)\right)
\end{equation*}

Using the expression for functional derivative to find the required dot product,

\begin{align*}
    \langle\bmu_, \nabla_{\pi}\gF(\bpi)\rangle 
    &= \int\limits_{\Pi} \sum\limits_{s\in\gS^+} d[\pi](s)(1-\bd[\bpi](s)) \bmu(\pi) d\pi = \sum\limits_{s\in\gS^+} (1-\bd[\bpi](s)) \int\limits_{\Pi} d[\pi](s) \bmu(\pi) d\pi \\
    &= \sum\limits_{s\in\gS^+} \bd[\bmu](s) (1-\bd[\bpi](s)) = \sum\limits_{s\in\gS^+} \bd[\bmu](s) - \sum\limits_{s\in\gS^+} \bd[\bmu](s)\bd[\bpi](s)
\end{align*}

\begin{equation}\label{eqn:dot_product}
    \langle\bmu_, \nabla_{\pi}\gF(\bpi)\rangle = \sum\limits_{s\in\gS^+} \bd[\bmu](s) - \sum\limits_{s\in\gS^+} \bd[\bmu](s)\bd[\bpi](s)
\end{equation}

Using expression for dot product and definition of $\gF$,

\begin{align*}
    \gF(\bpi_1) + \langle\bpi_2-\bpi_1, \nabla_{\pi}\gF(\bpi_1)\rangle -\gF(\bpi_2) &= \sum\limits_{s\in\gS^+} (\bd[\bpi_1](s)-\bd[\bpi_2](s)) \\
    &\quad\quad\quad\quad + \frac{1}{2}\sum\limits_{s\in\gS^+} (\bd[\bpi_2](s)^2-\bd[\bpi_1](s)^2) \\
    &\quad\quad\quad\quad + \sum\limits_{s\in\gS^+} (\bd[\bpi_2](s) - \bd[\bpi_1](s)) \\
    &\quad\quad\quad\quad + \sum\limits_{s\in\gS^+} (\bd[\bpi_1](s)^2 - \bd[\bpi_1](s)\bd[\bpi_2](s))\\
    &= \frac{1}{2}\sum\limits_{s\in\gS^+} (\bd[\bpi_2](s) - \bd[\bpi_1](s))^2
\end{align*}

As $\bpi_2 - \bpi_1 = \lambda(\bpi-\bpi_1)$, we have,

\begin{align*}
    \bd[\bpi_2](s) - \bd[\bpi_1](s) = \lambda(\bd[\bpi](s) - \bd[\bpi_1](s))
\end{align*}

Squaring and summing over all positive states, we have

\begin{align*}
    \sum\limits_{s\in\gS^+}(\bd[\bpi_2](s) - \bd[\bpi_1](s))^2 = \lambda^2 \sum\limits_{s\in\gS^+} (\bd[\bpi](s) - \bd[\bpi_1](s))^2 \leq 2\lambda^2
\end{align*}

Re-arranging we have,

\begin{align*}
    \frac{1}{\lambda^2} \leq \frac{2}{\sum\limits_{s\in\gS^+}(\bd[\bpi_2](s) - \bd[\bpi_1](s))^2}
\end{align*}

Combining above results, we have

\begin{align*}
    \frac{2}{\lambda^2}\left[\gF(\bpi_1) + \langle\bpi_2-\bpi_1, \nabla_{\pi}\gF(\bpi_1)\rangle -\gF(\bpi_2)\right] \leq 1
\end{align*}

Thus, $C_\gF \leq 1$.

\section{Proof of Lemma \ref{lemma:wf_convergence}}\label{appendix:wf_convergence}
\begin{proof}
Define $d_k = \bd[\bpi_k]$ and $\gG(d) = \sum_{s\in\gS^+} \left[d(s) - (1/2)d(s)^2\right]$.
Note that $\gG(\bd[\bpi]) = \gF(\bpi)$.

The update step in algorithm is $\bpi_k = (1-\lambda_k)\bpi_{k-1} + \lambda_k\bmu_k = \bpi_{k-1} + \lambda_k(\bmu_k-\bpi_{k-1})$.
This gives us $\bpi_k - \bpi_{k-1} = \lambda_k(\bmu_k-\bpi_{k-1})$.

From Lemma \ref{lemma:objective_properties}, we have $\gF(\bpi_k)-\gF(\bpi_{k-1}) \geq \langle\bpi_k-\bpi_{k-1}, \nabla_\pi\gF(\bpi_{k-1})\rangle - \frac{C_\gF}{2}\lambda_k^2$.

Following the proof structure in \citet{pmlr-v28-jaggi13}, adding and subtracting $\gF(\bpi^*)$ on LHS and multiplying both sides by -1, we have,
\begin{equation}\label{eqn:curvature_ineq}
    h(\bpi_k)-h(\bpi_{k-1}) \leq \frac{C_\gF}{2}\lambda_k^2 - \lambda_k\langle\bmu_k-\bpi_{k-1}, \nabla_\pi\gF(\bpi_{k-1})\rangle
\end{equation}

From Equation \ref{eqn:dot_product},
\begin{align*}
    \langle\bmu_, \nabla_{\pi}\gF(\bpi)\rangle = \sum\limits_{s\in\gS^+} \bd[\bmu](s) - \sum\limits_{s\in\gS^+} \bd[\bmu](s)\bd[\bpi](s)
\end{align*}

Define $r_k(s)$  as follows:
\begin{equation}\label{eqn:reward}
    r_k(s) = 
    \begin{cases}
    1-\bd[\bpi_{k-1}](s), & s\in\gS^+\\
    0, & s\in\gS^-
    \end{cases}
\end{equation}

So we have,
\begin{equation}\label{eqn:rl_performance}
    \langle\bmu_k, \nabla_\pi\gF(\bpi_{k-1})\rangle = \sum\limits_{s\in\gS} \bd[\bmu_k](s) r_k(s)
\end{equation}

Since $\bPi$ is a convex set as established in \ref{lemma:objective_properties}, $\exists \mu\in \Pi$ such that the policy mixture containing only $\mu$  maximizes $\langle\bmu, \nabla_\pi\gF(\bpi_{k-1})\rangle$ because policy mixtures containing a single policy from set $\Pi$ form the extreme points of the convex set $\bPi$.

Note that for a policy mixture $\bmu_k = \{(\mu_k, 1)\}$, $\forall s\in\gS$, $\bd[\bmu_k](s)=d[\mu_k](s)$. And maximizing $\langle\bmu, \nabla_\pi\gF(\bpi_{k-1})\rangle$ boils down to solving the following problem

\begin{equation*}
    \argmax\limits_{\mu_k\in\Pi}\sum\limits_{s\in\gS} d[\mu_k](s) r_k(s)    
\end{equation*}

Note that this is equivalent to solving an RL problem where the MDP is the same as original MDP but with $r_k$ as the reward function instead. This emergent structure has been previously exploited in \citet{hazan2019provably}.
$\langle\bmu, \nabla_\pi\gF(\bpi_{k-1})\rangle$ is the expected return of a policy mixture $\bmu$ w.r.t. reward function $r_k$. Let $\bmu_k^*$ be the optimal policy mixture for the reward function $r_k$.
Assuming that the solution given by RL algorithm is at most $\epsilon_k$ sub-optimal with probability $1-\delta_k$, we have

\begin{align*}
    \langle\bmu_k, \nabla_\pi\gF(\bpi_{k-1})\rangle + \epsilon_k &\geq \langle\bmu_k^*, \nabla_\pi\gF(\bpi_{k-1})\rangle\\
    \langle\bmu_k, \nabla_\pi\gF(\bpi_{k-1})\rangle + \epsilon_k &\geq \langle\bmu, \nabla_\pi\gF(\bpi_{k-1})\rangle \quad \forall \bmu\in\bPi \\
    \langle\bmu_k, \nabla_\pi\gF(\bpi_{k-1})\rangle + \epsilon_k &\geq \langle\bpi^*, \nabla_\pi\gF(\bpi_{k-1})\rangle
\end{align*}

From Equation \ref{eqn:curvature_ineq} and \ref{eqn:rl_performance}, we have

\begin{align*}
    h(\bpi_k)-h(\bpi_{k-1}) \leq \frac{C_\gF}{2}\lambda_k^2 - \lambda_k\langle\bmu_k-\bpi_{k-1}, \nabla_\pi\gF(\bpi_{k-1})\rangle
\end{align*}

With probability $1-\delta_k$,

\begin{align*}
    h(\bpi_k)-h(\bpi_{k-1}) \leq \frac{C_\gF}{2}\lambda_k^2 - \lambda_k\langle\bpi^*-\bpi_{k-1}, \nabla_\pi\gF(\bpi_{k-1})\rangle + \lambda_k\epsilon_k
\end{align*}

Since $\gF$ is concave, $\gF(\bpi^*)-\gF(\bpi_{k-1}) \leq \langle\bpi^*-\bpi_{k-1}, \nabla_\pi\gF(\bpi_{k-1})\rangle$. Hence,

\begin{align*}
    h(\bpi_k)-h(\bpi_{k-1}) &\leq \frac{C_\gF}{2}\lambda_k^2 - \lambda_k (\gF(\bpi^*)-\gF(\bpi_{k-1})) + \lambda_k\epsilon_k\\
    h(\bpi_k)-h(\bpi_{k-1}) &\leq \frac{C_\gF}{2}\lambda_k^2 - \lambda_k h(\bpi_{k-1}) + \lambda_k\epsilon_k\\
    h(\bpi_k) &\leq (1 - \lambda_k) h(\bpi_{k-1}) + \frac{C_\gF}{2}\lambda_k^2 + \lambda_k\epsilon_k
\end{align*}

For $\lambda_k = \frac{2}{k+1}$,

\begin{align*}
    h(\bpi_k) - \frac{k-1}{k+1} h(\bpi_{k-1}) &\leq \frac{2C_\gF}{(k+1)^2} + \frac{2\epsilon_k}{k+1}
\end{align*}

Multiplying $k(k+1)$ on both sides,

\begin{align*}
    k(k+1)h(\bpi_k) - (k-1)k h(\bpi_{k-1}) &\leq \frac{2kC_\gF}{k+1} + 2k\epsilon_k
\end{align*}

Summing for $k = 1\cdots K$ by union bound and Boole's inequality, with probability $\geq 1- \sum\limits_{k=1}^K \delta_k$,

\begin{align*}
    K(K+1)h(\bpi_K) &\leq 2\sum\limits_{k=1}^K \left(\frac{kC_\gF}{k+1} + k\epsilon_k\right)\\
    &\leq 2\sum\limits_{k=1}^K \left(C_\gF + k\epsilon_k\right)\\
    h(\bpi_K) &\leq \frac{2C_\gF}{K+1} + \frac{2}{K(K+1)}\sum\limits_{k=1}^K k\epsilon_k
\end{align*}
\end{proof}

\section{Proof of Lemma \ref{lemma:fqi_performance_difference}}\label{appendix:fqi_performance_difference}

\begin{proof}

Let $\mu_k$ be the policy obtained using FQI with rewards estimated with $\hat{d}_k$.
Since, FQI is an iterative algorithm, let the policy obtained at $i^{th}$ iteration be denoted by $\mu_k^i$ and the corresponding Q-value function by $f_i$.
Also, denote the optimal policy for MDP with true reward function as defined in Equation \ref{eqn:reward} by $\mu_k^*$ and the corresponding Q-value function by $Q^*$.
We use $V(\pi)$ to denote the performance of a policy $\pi$ for MDP with true reward as defined in Equation \ref{eqn:reward}.
Note that there are two sources of error here, 1) our algorithm optimizes for a pseudo-reward which is essentially an estimate of the true reward, and 2) inherent error because of FQI.
Both these factors contribute to the sub-optimality of $\mu_k$.
We denote the discounted marginal state distribution induced by the policy $\pi$ under aforementioned MDP as $d^{\pi}$.

Using performance difference lemma from \cite{agarwal2019reinforcement},

\begin{align*}
    (1-\gamma)(V(\mu_k^*) - V(\mu_k^i)) &= \E_{s\sim d^{\mu_k^i}} \left[Q^*(s,\mu^*(s))-Q^*(s,\mu_k^i(s))\right]\\
    &\leq \E_{s\sim d^{\mu_k^i}} \left[Q^*(s,\mu^*(s)) -f_i(s,\mu^*(s)) +f_i(s,\mu_k^i(s)) -Q^*(s,\mu_k^i(s))\right]\\
    &\leq \|Q^*-f_i\|_{1,d^{\mu_k^i}\odot \mu^*} + \|Q^*-f_i\|_{1,d^{\mu_k^i}\odot \mu^i_k}\\
    &\leq \|Q^*-f_i\|_{2,d^{\mu_k^i}\odot \mu^*} + \|Q^*-f_i\|_{2,d^{\mu_k^i}\odot \mu^i_k}
\end{align*}

Consider state-action distribution $\beta$ induced by some policy.

Let $\beta^{i-1}(s',a') = \sum\limits_{s,a} \beta^{i}(s,a) P(s'|s,a) \1\{a'=\argmax_a \left[Q^*(s',a)-f_{i-1}(s',a)\right]^2\}$. Also, let $\beta_P^{i}(s')=\sum\limits_{s,a} \beta^{i}(s,a)P(s'|s,a)$. We have

\begin{align*}
    \|Q^*-f_i\|_{2,\beta} &\leq \|Q^*-\gT f_{i-1}\|_{2,\beta} + \|f_i-\gT f_{i-1}\|_{2,\beta}\\
    &\leq \sqrt{\operatorname{\E}\limits_{\substack{s,a\sim \beta\\s'\sim P(\cdot|s,a)}} \left[\substack{R(s')\left(\bd[\bpi_k](s') - \hat{d}_k(s')\right) \\+\gamma\left(\max_{a'} Q^*(s',a') - \max_{a'} f_{i-1}(s',a')\right)}\right]^2} + \sqrt{C}\|f_i-\gT f_{i-1}\|_{2,\nu}\\
    &\leq \|R\odot(r_k-\hat{r}_k)\|_{2,\beta_P} + \gamma\|Q^*-f_{i-1}\|_{2,\beta'} + \sqrt{C}\|f_i-\gT f_{i-1}\|_{2,\nu}
\end{align*}

Let $\beta^i = \beta$ and $\beta_P^i=\beta_P$,

\begin{align*}
    \|Q^*-f_i\|_{2,\beta^i} &- \gamma\|Q^*-f_{i-1}\|_{2,\beta^{i-1}} \leq \|R\odot(r_k-\hat{r}_k)\|_{2,\beta_P^i} + \sqrt{C}\|f_i-\gT f_{i-1}\|_{2,\nu}\\
    \cdots\\
    \|Q^*-f_1\|_{2,\beta^1} &- \gamma\|Q^*-f_0\|_{2,\beta^0} \leq \|R\odot(r_k-\hat{r}_k)\|_{2,\beta_P^1} + \sqrt{C}\|f_1-\gT f_0\|_{2,\nu}\\
\end{align*}

Multiplying $j^{th}$ inequality by $\gamma^{j-1}$ and summing up the above inequalities,

\begin{align*}
    \|Q^*-f_i\|_{2,\beta} &\leq \sum\limits_{j=1}^i \gamma^{i-j} \|R\odot(r-\hat{r})\|_{2,\beta_P^j} + \sqrt{C}\sum\limits_{j=1}^{i}\gamma^{j-1}\|f_{i-j+1}-\gT f_{i-j}\|_{2,\nu} \\
    &\quad\quad + \gamma^i\|Q^*-f_0\|_{2,\beta^0}
\end{align*}

We will now show $\|R\odot(r-\hat{r})\|_{2,\alpha} \leq \epsilon_d$.

We know that $r(s)-\hat{r}(s) = 0$ for $s \notin \gS^+$. For simplicity, denote $\hat{d}_k$ by $\hat{d}$ and $\bd[\bpi_k]$ by $d$.
For $s \in \gS^+$, we have
\begin{align*}
    |r(s) - \hat{r}(s)| = |\hat{d}(s) - d(s)|
\end{align*}

In Supplementary \ref{appendix:error_in_d_approximation}, we show that $|\hat{d}(s) - d(s)| \le \epsilon_d$ with probability $\geq 1-\delta_d$. 

So, with probability $\geq 1-\delta_d$,
\begin{align*}
    |r(s) - \hat{r}(s)| &\leq \epsilon_d
\end{align*}

As the above inequality holds for all $s\in \gS^+$,
\begin{align*}
    \|r(s) - \hat{r}(s)\|_{\infty} \leq \epsilon_d
\end{align*}

So, we have that with probability $\geq 1-\delta_d$, $\|R\odot(r-\hat{r})\|_{2,\alpha} \leq \|R\|_{\infty}\cdot\|(r-\hat{r})\|_{\infty} \leq \epsilon_d$ and thus assuming $\|f_{i-t+1}-\gT f_{i-t}\|_{2,\nu} \leq \omega$ for all $t \in \{0,\dots, i\}$, we have,

\begin{equation}\label{eqn:bound1}
    \|Q^*-f_i\|_{2,\beta} \leq \frac{\epsilon_d + \sqrt{C}\omega}{1-\gamma} + \gamma^i V_{max}
\end{equation}

Following the pseudo-dimension based analysis of FQI in Supplementary F.2 of  \citet{LVY19}, we bound $\omega$ with high probability. Specifically, we use the result from Equation 35 of \citet{LVY19},

\begin{align*}
    P\left\{\|f_j-\gT f_{j-1}\|_{2,\nu} > \epsilon + 4\epsilon_\tn{approx}^2\right\} \leq \xi(\epsilon)\\
\end{align*}
where,
\begin{align*}
    \xi(\epsilon) &= 14\cdot e\cdot (\tn{dim}_\mc{F} + 1) \left(\frac{640V_{\max}^2}{\epsilon}\right)^{\tn{dim}_\mc{F}}\cdot \tn{exp}\left(-\frac{\epsilon HN_T}{48\cdot 214V_{\max}^4}\right) + \tn{exp}\left(-\frac{3}{416}\frac{\epsilon HN_T}{V_{\max}^2}\right)
\end{align*}

Since $V_{\max} \geq 1$,
\begin{align*}
    -\frac{\epsilon HN_T}{48\cdot 214V_{\max}^4} \geq -\frac{3}{416}\frac{\epsilon HN_T}{V_{\max}^2}
\end{align*}

Hence,
\begin{align*}
    P\left\{\|f_j-\gT f_{j-1}\|_{2,\nu} - 4\epsilon_\tn{approx}^2 > \epsilon\right\}& \\
    & \!\!\!\!\!\!\!\!\!\!\!\!\!\!\!\!\!\!\!\!\! \leq 28\cdot e\cdot (\tn{dim}_\mc{F} + 1) \left(\frac{640V_{\max}^2}{\epsilon}\right)^{\tn{dim}_\mc{F}}\cdot \tn{exp}\left(-\frac{\epsilon HN_T}{48\cdot 214V_{\max}^4}\right)
\end{align*}

Consider $\delta$ such that,
\begin{align*}
    \delta \geq 28\cdot e\cdot (\tn{dim}_\mc{F} + 1) \left(\frac{640V_{\max}^2}{\epsilon}\right)^{\tn{dim}_\mc{F}}\cdot \tn{exp}\left(-\frac{\epsilon HN_T}{48\cdot 214V_{\max}^4}\right)
\end{align*}

Taking negative log on both sides, we obtain the following
\begin{align*}
    \log\left(\frac{1}{\delta}\right) \leq \frac{\epsilon HN_T}{48\cdot 214 V_\tn{max}^4} - \log\left[28e(\tn{dim}_\mc{F} + 1)(640 V_{\max}^2)^{\tn{dim}_\mc{F}}\right] + \tn{dim}_\mc{F} \log\epsilon = \chi(\epsilon)
\end{align*}

For the following value of $\epsilon$, we have $\chi(\epsilon) \geq \log(1/\delta)$.
\begin{equation}\label{eqn:eps}
    \epsilon = \frac{48\cdot 214 V_{\max}^4}{HN_T}\left[\log\frac{28e(\tn{dim}_\mc{F}+1)}{\delta} + \tn{dim}_\mc{F}\log(640V_\tn{max}^2HN_T)\right]
\end{equation}

Thus, with probability $\geq 1-\delta$ we have,
\begin{equation}\label{eqn:bound2}
    \|f_j-\gT f_{j-1}\|_{2,\nu} < \sqrt{4\epsilon_\tn{approx}^2 + \epsilon}
\end{equation}

Using Expressions \ref{eqn:eps}, \ref{eqn:bound1} and \ref{eqn:bound2}, we can conclude that the following holds with probability $\geq 1-(\delta+\delta_d)$
\begin{align*}
    \|Q^*-f_{N_\text{FQI}}\|_{2,\beta} & \quad  \leq \quad \frac{\gamma^{N_{FQI}}V_{\max}}{1-\gamma} + \frac{1}{(1-\gamma)^2} \left(\gamma^H + \sqrt{\frac{\log \left(2/\delta_d\right)}{2N_T}}\right) \\
    &\!\!\!\!\!\!\!\!\!\!\!\!\!\!\!\!\!\!\!\!\!\!\!\!\!\!\!\!\!\!\!\!\!\!\!\!\!\! + \frac{\sqrt{C}}{(1-\gamma)^2} \sqrt{4\epsilon_\text{approx}^2 + \frac{48\cdot214\cdot V_\text{max}^4}{HN_T}\left[\log\frac{28e(\text{dim}_\mc{F} + 1)N_\text{FQI}}{\delta} + \text{dim}_\mc{F} \cdot \log (640HN_TV_{\max}^2)\right]}
\end{align*}
\end{proof}

\section{Error in discounted frequency approximation}\label{appendix:error_in_d_approximation}

Consider the following distributions,
\begin{align*}
    d(s)&=(1-\gamma)\E\left[\sum\limits_{t=0}^{\infty} \gamma^t P(s_t=s|s_0\sim \rho_0, \pi)\right]\\
    \hat{d}(s)&=(1-\gamma) \sum\limits_{i=1}^{N_T} \sum \limits_{t=0}^{H-1} \gamma^t I(s_{i,t}=s)\\
    d_H(s) &= (1-\gamma)\E \left[\sum\limits_{t=0}^{H-1} \gamma^t P(s_t=s|s_0\sim \rho_0, \pi)\right]
\end{align*}

Using triangle inequality,
\begin{align*}
    |\hat{d}(s)-d(s)| \leq |\hat{d}(s)-d_H(s)| + |d_H(s)-d(s)|
\end{align*}

We bound each of the terms on RHS,
\begin{align*}
    |d_H(s) - d(s)| &= (1-\gamma) \left|\sum\limits_{t=0}^{H-1} \gamma^t P(s_t=s|s_0\sim \rho_0,\pi)-\sum\limits_{t=0}^\infty \gamma^t P(s_t=s)\right|\\
    &= (1-\gamma)\left|\sum\limits_{t=H}^\infty P(s_t=s)\right|\\
    &\leq (1-\gamma)\frac{\gamma^H}{1-\gamma} = \gamma^H
\end{align*}

$|\hat{d}(s)-d_H(s)|$ is deviation of sample mean from true expectation of random variable $X_i(s) = (1-\gamma) \sum\limits_{t=0}^{H-1} \gamma^t I(s_{i,t}=s)$. Using Hoeffdings' inequality,
\begin{align*}
    P\left(\left|\hat{d}(s)-d_H(s)\right|\geq \sqrt{\frac{\log 2/\delta_d}{2N_T}}\right) \leq \delta_d
\end{align*}

Thus, with probability $\geq 1-\delta_d$, $\left|\hat{d}(s)-d_H(s)\right| \leq \sqrt{\frac{\log 2/\delta_d}{2N_T}}$.

Combining, the following holds with probability $\geq 1-\delta_d$
\begin{align*}
    \left|\hat{d}(s)-d(s)\right| \leq \gamma^H + \sqrt{\frac{\log 2/\delta_d}{2N_T}}
\end{align*}

\section{Experimental Details (Tabular)}\label{app:tabular_experiments}
For the simple MDP, we implement a vectorized step function for faster training. The baseline algorithms are based on tabular Q Learning. We use Algorithm \ref{alg:ddgc} for DDGC implementation. As described in Figure \ref{figure:mdp}, the MDP is continuing with all the terminal states acting like sink states i.e. any action taken from those states leads back to the same state. Since this is a discrete MDP, the learnt policies are actually deterministic. The policy mixture comprises of multiple deterministic policies with possibly different weights.

\section{Experimental Details (MuJoCo)}\label{app:jaxgcrl_experiments}

We describe the experimental setup used for experiments based on Brax environments. We split the details into three parts: environments, algorithms and baselines, and network architectures.

\subsection{Environments}
Brax \citep{brax2021github} provides GPU-accelerated, vectorized physics environments. We adapt Brax for the multi-goal setting: the agent receives a sparse binary reward based solely on the visited state, with no goal information in the observation vector. This makes the problem suitable for empirically validating algorithms that must discover and cover a fixed set of goals without being told which goal to pursue. We run $\mathbf{64}$ parallel environment instances simultaneously to accelerate data collection. All experiments use an episode length of $\mathbf{500}$ steps.

Our adaptations introduce the following modifications to the standard Brax environments:
\begin{itemize}
    \item A fixed, finite set of goal states is defined per environment. The observation vector contains no information about the goal set — all goals are present simultaneously, and the agent must discover and cover them purely from the sparse reward signal.
    \item Each environment is continuing (non-terminating, but allows resets) with sparse binary rewards that depend only on the current state. The extrinsic reward is $\mathbf{1}$ when the agent enters any goal state and $\mathbf{0}$ otherwise.
    \item Episodes are reset after a fixed horizon of $\mathbf{500}$ steps, or upon environment-specific termination conditions (e.g., unhealthy posture for Ant).
\end{itemize}

Table~\ref{tab:jaxgcrl_details} describes the state-action space, goal criteria, number of goals, and episode length for each environment.

\begin{table}[ht]
    \centering
    \small
    \begin{tabular}{c|c|c|c|c|c}
        \toprule
        Environment & State Space & Action Space & \# Goals & Goal Criterion & Episode Length \\
        \midrule
        Reacher     & $\sR^{6}$  & $(-1,1)^{2}$  & 10 & Fingertip-goal distance $< 0.01$ & 500 \\
        Pusher      & $\sR^{20}$ & $(-1,1)^{7}$  & 5  & Object-goal distance $< 0.03$    & 500 \\
        Ant         & $\sR^{27}$ & $(-1,1)^{8}$  & 10 & Torso xy-goal distance $< 0.2$   & 500 \\
        Half Cheetah & $\sR^{17}$ & $(-1,1)^{6}$ & 6  & Torso x-goal distance $< 0.15$   & 500 \\
        \bottomrule
    \end{tabular}
    \caption{Description of environments. Goals are fixed across resets. The discount factor is $\gamma = 0.99$ for all environments. Sparse binary rewards are used throughout: the extrinsic reward is $\mathbf{1}$ when the agent visits any goal state and $\mathbf{0}$ otherwise. The Reacher observation consists of joint angles (cos/sin) and fingertip velocity; Pusher of joint positions, velocities, arm tip and object positions; Ant of full proprioceptive state; Half Cheetah of joint positions and velocities. Goal positions for each environment are fixed, listed in the source code, and visualized in Figure~\ref{sec:rl_benchmarks}.}
    \label{tab:jaxgcrl_details}
\end{table}

\subsection{Algorithm \& Baselines}

We compare five algorithms: SAC \citep{haarnoja2018soft} as the pure RL baseline, PC (SAC with SimHash pseudo-count intrinsic reward) \citep{bellemare2016unifying} as a count-based exploration baseline, SMM \citep{lee2019efficient} as a state marginal matching baseline, GFlowNet \citep{bengio2021flow} as a flow-matching baseline, and our proposed algorithm DDGC. All algorithms use the Adam optimizer \citep{kingma2014adam}. The target entropy for SAC-based methods is set to $-|\mathcal{A}|/2$ where $|\mathcal{A}|$ is the action dimension. The soft update (Polyak) coefficient is $\tau = 0.005$ for all SAC-based methods.

The total environment step budgets differ per environment to account for task difficulty: Reacher uses $1 \times 10^6$ steps, Pusher uses $2 \times 10^6$ steps, and Ant and HalfCheetah use $5 \times 10^6$ steps. All algorithms use the same step budget for a given environment.

Hyperparameter selection was performed via grid search over 7 random seeds. Table~\ref{tab:algorithm_details} lists the hyperparameters swept and the values considered.

\begin{table}[!ht]
    \small
    \centering
    \begin{tabular}{c|p{7cm}}
        \toprule
        Algorithm & Hyperparameters Swept \\
        \midrule
        SAC
        &
        \texttt{actor\_lr} $\in \{10^{-4},\, 3\times10^{-4},\, 10^{-3}\}$\newline
        \texttt{batch\_size} $\in \{512,\, 1024,\, 2048\}$\newline
        \texttt{hidden\_dim} $\in \{256,\, 512\}$\newline
        \texttt{discount} $\in \{0.99,\, 0.995\}$\newline
        \texttt{reward\_scaling} $\in \{1.0,\, 5.0,\, 10.0\}$\newline
        (\texttt{critic\_lr} = \texttt{alpha\_lr} = \texttt{actor\_lr})
        \\
        \midrule
        PC (SAC)
        &
        \texttt{actor\_lr} $\in \{10^{-4},\, 3\times10^{-4},\, 10^{-3}\}$\newline
        \texttt{batch\_size} $\in \{512,\, 1024,\, 2048\}$\newline
        \texttt{hidden\_dim} $\in \{256,\, 512\}$\newline
        \texttt{discount} $\in \{0.99,\, 0.995\}$\newline
        \texttt{reward\_scaling} $\in \{1.0,\, 5.0,\, 10.0\}$\newline
        \texttt{pc\_beta} $\in \{0.01,\, 0.1,\, 1.0\}$\newline
        (\texttt{critic\_lr} = \texttt{alpha\_lr} = \texttt{actor\_lr})
        \\
        \midrule
        SMM (SAC)
        &
        \texttt{actor\_lr} $\in \{10^{-4},\, 3\times10^{-4},\, 10^{-3}\}$\newline
        \texttt{vae\_lr} $\in \{10^{-4},\, 10^{-3}\}$\newline
        \texttt{batch\_size} $\in \{512,\, 1024\}$\newline
        \texttt{hidden\_dim} $\in \{256,\, 512\}$\newline
        \texttt{num\_iterations} $\in \{10,\, 20\}$\newline
        (\texttt{critic\_lr} = \texttt{alpha\_lr} = \texttt{actor\_lr})
        \\
        \midrule
        DDGC
        &
        \texttt{actor\_lr} $\in \{10^{-4},\, 3\times10^{-4},\, 10^{-3}\}$\newline
        \texttt{rnd\_lr} $\in \{10^{-4},\, 10^{-3}\}$\newline
        \texttt{num\_policies} $\in \{5,\, 10\}$\newline
        \texttt{exploration\_fraction} $\in \{0.1,\, 0.2,\, 0.3\}$\newline
        \texttt{hidden\_dim} $\in \{256,\, 512\}$\newline
        \texttt{fitted\_ac\_iters} $\in \{50,\, 100\}$\newline
        (\texttt{critic\_lr} = \texttt{actor\_lr})
        \\
        \midrule
        GFlowNet
        &
        \texttt{actor\_lr} $\in \{10^{-4},\, 3\times10^{-4},\, 10^{-3}\}$\newline
        \texttt{logz\_lr} $\in \{10^{-3},\, 10^{-2}\}$\newline
        \texttt{batch\_size} $\in \{512,\, 1024\}$\newline
        \texttt{hidden\_dim} $\in \{256,\, 512\}$\newline
        \texttt{tb\_lambda} $\in \{0.5,\, 0.9,\, 1.0\}$\newline
        \texttt{reward\_scale} $\in \{1.0,\, 5.0,\, 10.0\}$\newline
        (\texttt{critic\_lr} = \texttt{actor\_lr})
        \\
        \bottomrule
    \end{tabular}
    \caption{Hyperparameters swept for each algorithm. For SAC-based methods, critic and alpha learning rates are set equal to the actor learning rate. The replay buffer capacity is $10^6$ for all SAC-based methods. The minimum replay size before training begins is $10^4$ for SAC and PC, and $5 \times 10^3$ for SMM. For DDGC, the FAC batch size is fixed at 512 and the buffer capacity is 5000 trajectories. The RND network has hidden dimension 256 and output dimension 64.}
    \label{tab:algorithm_details}
\end{table}

\paragraph{PC (SAC) implementation details.}
The pseudo-count baseline augments the SAC reward with a SimHash-based exploration bonus \citep{tang2017exploration}. A random projection matrix $A \in \mathbb{R}^{d_h \times d_s}$ (with $d_h = 16$) maps observations to binary hash codes; a count table of size $10^5$ tracks visit frequencies. The intrinsic bonus for state $s$ is $\beta / \sqrt{n(s)}$ where $n(s)$ is the hash-bucket visit count and $\beta$ is the \texttt{pc\_beta} hyperparameter. The augmented reward $r_{\text{aug}} = r_{\text{ext}} + \beta / \sqrt{n(s')}$ is stored in the replay buffer.

\paragraph{SMM implementation details.}
SMM \citep{lee2019efficient} runs $K$ iterations of fictitious play. In each iteration: (1) a VAE density model $q_\phi(s)$ is fitted on all replay buffer observations for 1000 gradient steps; (2) a fresh SAC policy is trained for $T/K$ steps using the intrinsic reward $r(s) = -\log q_\phi(s)$ (approximated via VAE reconstruction error); (3) transitions are added to a shared replay buffer (never cleared). The final policy is a uniform mixture over all $K$ saved policy checkpoints.

\paragraph{GFlowNet implementation details.}
The GFlowNet baseline adapts the trajectory balance (TB) objective \citep{malkin2022trajectory} to continuous action spaces. A learned scalar $\log Z$ (the log partition function) is maintained alongside the actor. The actor loss combines a critic-based policy gradient term with a TB regularizer:
\[
\mathcal{L} = -\mathbb{E}[\min(Q_1, Q_2)(s, a)] + \lambda_{\text{TB}} \cdot \mathbb{E}\left[(\log Z + \log P_F(a|s) - \log(\beta \cdot r + \epsilon))^2\right]
\]
where $P_F$ is the forward policy (actor), $r$ is the extrinsic reward, $\beta$ is \texttt{reward\_scale}, and $\lambda_{\text{TB}}$ is \texttt{tb\_lambda}. The critic is trained with the standard double-Q Bellman loss. $\log Z$ is updated jointly with the actor via its own Adam optimizer with learning rate \texttt{logz\_lr}. The replay buffer capacity is $10^6$ and training begins after $10^4$ warmup steps.

\paragraph{DDGC implementation details.}
DDGC proceeds in two phases. In the \emph{exploration phase}, a Random Network Distillation (RND) \citep{burda2018exploration} module drives random exploration for a fraction \texttt{exploration\_fraction} of the total step budget. Goal-visiting trajectories discovered during exploration are stored in a goal buffer (capacity: 5000 trajectories). In the \emph{policy mixture phase}, DDGC iteratively constructs a mixture of $K$ policies via Fitted Actor-Critic (FAC). At iteration $k$: (1) trajectories are collected using the most recent policy; (2) the discounted state visitation distribution $d$ is updated incrementally as $d \leftarrow \frac{k-1}{k+1} d_{\text{prev}} + \frac{2}{k+1} d_{\text{new}}$; (3) a modified dataset is constructed where goal transitions receive reward $1 - d(\text{goal})$ (incentivizing under-visited goals) and non-goal transitions receive reward $0$; (4) FAC is run on this dataset for \texttt{fitted\_ac\_iters} iterations; (5) the new policy is added to the mixture with weight $\frac{2}{k+1}$, and existing weights are decayed by $\frac{k-1}{k+1}$.

\subsection{Network Architectures}

All networks are implemented in JAX \citep{jax2018github} using Flax \citep{flax2020github}.

\textbf{Policy (Actor) Network} comprises 2 hidden layers of dimension \texttt{hidden\_dim} (default: 256) with \textit{tanh} activations, followed by two separate linear output heads for the mean $\mu$ and log standard deviation $\log\sigma$ of the action distribution. The log standard deviation is clipped to $[-5, 2]$ for numerical stability. Actions are sampled from a Tanh-squashed Gaussian: $a = \tanh(\mu + \sigma \odot \epsilon)$, $\epsilon \sim \mathcal{N}(0, I)$. The log-probability accounts for the Tanh squashing correction: $\log \pi(a|s) = \log \mathcal{N}(\cdot) - \sum_i \log(1 - a_i^2 + \epsilon)$.

\textbf{Critic (Double Q) Network} uses a double-critic architecture \citep{fujimoto2018addressing} with two independent Q-networks, each comprising 2 hidden layers of dimension \texttt{hidden\_dim} with \textit{tanh} activations followed by a linear output layer. The observation and action vectors are concatenated before being passed to each Q-network. SAC updates use the minimum of the two Q-values for the target.

\textbf{VAE Density Network} (used in SMM) is a Variational Autoencoder \citep{kingma2013auto}. The encoder comprises 2 ReLU-activated hidden layers of dimension 256, followed by separate linear heads for the latent mean and log-variance (latent dimension: 32). The decoder mirrors the encoder with 2 ReLU-activated hidden layers of dimension 256 followed by a linear reconstruction head. The VAE is trained by maximizing the ELBO: $\mathcal{L} = \mathbb{E}[\|s - \hat{s}\|^2] + \text{KL}(\mathcal{N}(\mu, \sigma^2) \| \mathcal{N}(0, I))$.

\textbf{GFlowNet Actor Network} shares the same architecture as the Policy Network above (2 tanh-activated hidden layers, separate mean and log-std heads, log-std clipped to $[-5, 2]$, Tanh-squashed Gaussian actions), but is trained with the trajectory balance objective rather than SAC. A separate learned scalar $\log Z$ (initialized to 0) is optimized jointly with the actor.

\textbf{RND Network} (used in DDGC) consists of a fixed randomly-initialized \emph{target} network and a learned \emph{predictor} network, each with 2 ReLU-activated hidden layers of dimension 256 and a linear output of dimension 64. The RND exploration bonus for state $s$ is the mean squared error between the target and predictor outputs: $b(s) = \|\phi_{\text{target}}(s) - \phi_{\text{pred}}(s)\|^2$. The predictor is trained online to minimize this error, so novel states (with high prediction error) receive high bonuses.

\subsection{Computational Requirements}\label{app:compreq}
All of the experiments were run on a system with standard 16-core CPU, 256GB of memory with one V100 GPU.

Table~\ref{tab:runtimes} reports the average wall-clock runtime for a single training run, averaged across all environments and hyperparameter settings. Notably, our proposed method (DDGC) is computationally efficient, requiring the least amount of wall-clock time among all compared algorithms, while GFlowNets require significantly more time due to the computational overhead of the trajectory balance objective and forward-looking policy requirements.

\begin{table}[h]
    \centering
    \small
    \begin{tabular}{l|c}
        \toprule
        Algorithm & Average Runtime (h:m:s) \\
        \midrule
        GFlowNet & 7:07:16 \\
        SAC      & 2:55:49 \\
        SMM      & 2:56:12 \\
        PC       & 2:25:25 \\
        \textbf{DDGC (Ours)} & \textbf{2:10:31} \\
        \bottomrule
    \end{tabular}
    \caption{Average wall-clock runtime per seed. Times are averaged across all hyperparameters and environments on a single V100 GPU.}
    \label{tab:runtimes}
\end{table}

\vfill

\end{document}